\newtheorem{theorem}{Theorem}
\newtheorem{proposition}{Proposition}
\ifcvprfinal\pagestyle{empty}\fi
\begin{document}

%%%%%%%%% TITLE
\title{Revisiting the Evaluation of Uncertainty Estimation and Its Application to Explore Model Complexity-Uncertainty Trade-Off}

\author{Yukun Ding$^{1}$, Jinglan Liu$^{1}$, Jinjun Xiong$^{2}$, Yiyu Shi$^{1}$\\
$^{1}$ University of Notre Dame\\
$^{2}$ IBM Thomas J. Watson Research Center\\
{\tt\small \{yding5, jliu16, yshi4\}@nd.edu},
{\tt\small jinjun@us.ibm.com}
}

\maketitle
\thispagestyle{empty}

%%%%%%%%% ABSTRACT
\begin{abstract}
Accurately estimating uncertainties in neural network predictions is of great importance in building trusted DNNs-based models, and there is an increasing interest in providing accurate uncertainty estimation on many tasks, such as security cameras and autonomous driving vehicles. In this paper, we focus on the two main use cases of uncertainty estimation, {\em i.e.}, selective prediction and confidence calibration. We first reveal potential issues of commonly used quality metrics for uncertainty estimation in both use cases, and propose our new metrics to mitigate them. We then apply these new metrics to explore the trade-off between model complexity and uncertainty estimation quality, a critically missing work in the literature. Our empirical experiment results validate the superiority of the proposed metrics, and some interesting trends about the complexity-uncertainty trade-off are observed. 
\end{abstract}

\section{Introduction}
Deep neural networks (DNNs) have been widely used in vision tasks and achieved remarkable performance improvement. A major challenge in adopting DNNs to real-world mission-critical applications such as the medical image segmentation, is the lack of self-awareness and the tendency to fail silently~\cite{holzinger2017we}. 
In contrast, human's awareness of prediction uncertainty enables,
for example, human radiologists to conduct further investigations whenever they are in doubt for a diagnosis based on computed tomography (CT) images, and human drivers to slow down whenever they cannot clearly recognize an object. 
In order for DNNs to gain human's trust in making critical
decisions, especially in mission-critical scenarios, we need to equip DNNs with self-awareness on a par with its task competency. Most recently, much effort has been devoted to providing an accurate quantified score representing the uncertainty of every prediction, where wrongly predicted instances are expected to be assigned with low confidence scores and correctly predicted ones are expected to be assigned with high confidence scores \footnote{Confidence is the additive inverse of uncertainty with respect to 1, so they are used interchangeably in the literature.}~\cite{devries2018learning,guo2017calibration,malinin2018predictive,sakaridis2019semantic,shrikumar2019calibration}. 

The competency awareness of DNNs is commonly realized in two use cases of uncertainty estimation: {\em selective prediction}~\cite{geifman2017selective,lakshminarayanan2017simple,mandelbaum2017distance,nair2018exploring} and {\em confidence calibration}~\cite{guo2017calibration,heo2018uncertainty,kumar2018trainable,naeini2015obtaining,sander2019towards,seo2018confidence}. For the selective prediction, the obtained confidence scores are thresholded and the model can abstain from making predictions on samples with low confidence scores to achieve higher accuracy on the remaining part~\cite{hendrycks2016baseline}. For instance, in automatic segmentation of medical images, it is desired that the machine segments the common and easy area of medical images and refers the area with unusual appearance to the radiologists to ensure an extremely high accuracy~\cite{sander2019towards}. In this case, the confidence score is expected to be used for separating correct predictions and wrong predictions, and the popular quality metrics used to evaluate uncertainty estimation are {\em Area Under Receiver Operating Characteristic curve} (AUROC) and {\em Area Under Precision-Recall curve} (AUPR)~\cite{chen2018confidence,hendrycks2016baseline,malinin2018predictive}. For the confidence calibration, the aim is to provide a confidence score that approximates the empirical probability of a prediction being correct~\cite{guo2017calibration,sander2019towards}. 
For instance, in autonomous driving, human intervention is often not available in a timely manner and the high-level planning module will need such calibrated confidence score of pedestrian detection for instant decision making. In this case, common quality metrics are {\em Expected Calibration Error} (ECE) and {\em Maximum Calibration Error} (MCE) ~\cite{guo2017calibration,kumar2018trainable,sander2019towards}.

Despite the recent advancements, we show that the quality metrics of neural network uncertainty estimation used by most existing works could be problematic, potentially leading to unfair comparisons, confusing results, and/or undesired learning behaviors. Specifically, for the selective prediction, we show that even minimal changes in prediction models can make the commonly used evaluation based on AUROC and AUPR meaningless or even misleading. To address this issue, we propose to use
a different metric, called {\em Area Under Risk-Coverage} (AURC) curve as the primary metric for selective prediction. We show that AURC is the only reliable metric among AUROC, AUPR, and AURC, when the underlying prediction model changes, and is consistent with AUROC and AUPR when the underlying prediction model stays the same.
As for confidence calibration, we show that, because of the basic binning strategy employed, the commonly used evaluation metrics ECE and MCE cannot expose some large calibration error even in the high confidence area. Moreover, they are vulnerable to internal compensation and inaccurate accuracy estimation in each confidence interval, which leads to poor robustness and inferior accuracy. Therefore, we propose a new binning strategy, called {\em adaptive binning}, for the evaluation by ECE and MCE, and empirically show its superiority.

While the complexity-accuracy trade-off of DNN-based models has been extensively studied in the literature, the effect of model complexity on uncertainty estimation quality is almost unknown. However, with the prevalence of DNNs in real-world applications, it is ever more important for model designers to seek the best trade-off between cost and different aspects of model performance under various resources constraints. Therefore, a better understanding of the uncertainty-related performance changes with model complexity is required. We first give some theoretical analysis of the relation between the selective prediction and the confidence calibration. Then we use our proposed new metrics to explore the effect of model complexity on the uncertainty-related model performance. Our study serves two purposes. First, it validates the effectiveness and robustness of our proposed evaluation metrics. Second, it provides the first empirical study on how uncertainty-related model performance changes with model complexity. From our study, we observe that, interestingly, estimation quality changes significantly with model complexity for selective prediction, but is insensitive to model complexity for confidence calibration. 

In summary, the main contributions of this paper are as follows:
\begin{itemize}
\item We identify the potential issues of commonly used quality metrics for uncertainty estimation in both selective prediction and confidence calibration, and propose new metrics that provide more reliable and informative evaluations.

\item As an application and validation of the proposed metrics, we provide the first exploration of complexity-uncertainty trade-off, and show some interesting observations. 
\end{itemize}

\section{Related Works}
\label{sec:related}
\textbf{Uncertainty Estimation.} \quad Various methods exist in the literature to estimate the uncertainty of neural network predictions~\cite{devries2018leveraging,geifman2018bias,heo2018uncertainty,kendall2017uncertainties,liang2017enhancing}. The most popular approaches include softmax probability~\cite{hendrycks2016baseline,guo2017calibration}, Monte Carlo dropout~\cite{gal2016dropout,roy2019bayesian}, and learned confidence estimation~\cite{devries2018learning,liu2019deep}. The uncertainty estimation can either explicitly affect the model during the training process~\cite{dhamija2018reducing,kumar2018trainable,lee2017training} or work as a post-processing step that does not affect the underlying prediction models~\cite{chen2018confidence,guo2017calibration}. Note that in our definition and analysis, we did not make any assumption on how the confidence score is obtained or any correlation between the prediction and the confidence score. They can be obtained by any prediction model and any uncertainty estimation method.

\textbf{Evaluation Methods.} \quad The commonly used evaluation metrics for selective prediction are AUROC and  AUPR~\cite{chen2018confidence,hendrycks2016baseline,malinin2018predictive}. Recently, E-AURC has been used to evaluate the uncertainty estimation quality in a selective prediction scenario~\cite{geifman2018bias}. However, the E-AURC has exactly the same problem with AUROC and AUPR, because the accuracy difference is not considered, which is detailed in Section~\ref{sec:selective}. It is meaningless when the underlying prediction model varies, which happens in many cases, \eg comparing different models, evaluating algorithms that alter the training process~\cite{mandelbaum2017distance}, and when either MC-dropout or ensemble is used for uncertainty estimation~\cite{lakshminarayanan2017simple,nair2018exploring}.

Currently, the confidence calibration quality of neural network-based models are evaluated by ECE and MCE~\cite{guo2017calibration,kumar2018trainable,naeini2015obtaining,sander2019towards,seo2018confidence}.  In order to minimize the ECE, a differentiable proxy to ECE named MMCE is used for calibration-aware network training~\cite{kumar2018trainable}. Negative Log Likelihood (NLL) and Brier Score are used as indirect and supplementary measurements in some works~\cite{guo2017calibration,seo2018confidence}. We note that Brier Score and NLL are not suitable as primary metrics for confidence calibration, because they prefer better prediction rather than better calibration by design. ECE and MCE have been extended from the binary setting to the multi-class setting in ~\cite{nixon2019measuring,vaicenavicius2019evaluating}. In either way, the computation of ECE and MCE heavily depends on the binning strategy which is the focus of this work. Equal-size binning where every bin has a same number of samples was proposed as a remedy for the known issues of the common fixed equal-size binning ~\cite{nixon2019measuring,vaicenavicius2019evaluating}. However, we show that although equal-size binning helps, it is still not flexible enough to deal with highly non-uniform confidence distribution. ~\cite{naeini2015obtaining} uses a Bayesian score to average a number of models with equal-size binning. Such modeling averaging is orthogonal to our binning method. In addition, it is used as a calibration method to improve the performance measured by ECE that uses the conventional
equal-range binning.

\section{Problem Setting}
\label{sec:setting}

We put our discussion in a general classification setting. See ~\cite{kuleshov2018accurate} for recent advance in the regression setting. Following~\cite{kumar2018trainable}, we denote $\mathcal{Y}=\{1,2,\dots,K\}$ as the set of class labels, $\mathcal{X}$ as the input space, $\mathcal{D}$ as the data distribution, and $N_{\theta}(y|x)$ as the probability distribution of model predictions with input $x$, and model parameters $\theta$. For each input sample $x_i$ and true label $y_i$, the model gets a predicted label $\hat{y}_i=\text{argmax}_{y\in\mathcal{Y}}N_{\theta}(y|x_i)$ and a confidence score $r_i$. If $\hat{y}_i=y_i$, which means the prediction is correct, we have the correctness score $c_i=1$. Otherwise, $c_i=0$.
Then the distribution over $r$ and $c$ on $\mathcal{D}$ can be denoted as $P_{\theta,\mathcal{D}}(r,c)$.

\textbf{Selective Prediction.} \quad In selective prediction, with a confidence score $r_i$ for each input $x_i$ and a threshold $t$, the input from dataset $X$ and the prediction $\hat{Y}$ are split to $X_h=\{x_i|r_i>=t\}$, $X_l=\{x_i|r_i<t\}$ and $\hat{Y_h}=\{\hat{y_i}|r_i>=t\}$, $\hat{Y_l}=\{\hat{y_i}|r_i<t\}$ respectively. The model abstains from making prediction on $X_l$. Ideally, $\hat{Y_l}$ contains all wrong predictions and $\hat{Y_h}$ contains all correct predictions so that the error is avoided with the minimal cost. In this case, $r_i$ is used for separating correct predictions and wrong predictions, which is a binary classification problem and therefore the common quality metrics are
AUROC and AUPR~\cite{chen2018confidence,hendrycks2016baseline,malinin2018predictive,mandelbaum2017distance}. 

\textbf{Confidence Calibration.} \quad Confidence calibration aims to give a confidence score $r\in [0,1]$ that directly reflects the probability of the prediction being correct. The difference between the probability of correct prediction $E_{P_{\theta,\mathcal{D}}(c|r)}[c]$ and the confidence score $r$ is defined as the calibration error. Consequently, the expected calibration error (ECE) and maximum calibration error (MCE) are defined as: 
\begin{align}
    \text{ECE}(P_{\theta,\mathcal{D}}) =E_{P_{\theta,\mathcal{D}}(r)}[|E_{P_{\theta,\mathcal{D}}(c|r)}[c]-r|]
\end{align}%
\begin{align}
    \text{MCE}(P_{\theta,\mathcal{D}}) =\max_{r\in[0,1]}|E_{P_{\theta,\mathcal{D}}(c|r)}[c]-r|
\end{align}%

Practically, given a finite number of samples in $D\sim P_{\theta,\mathcal{D}}$, ECE and MCE are calculated by partitioning the $[0,1]$ range to $n$ bins according to a binning strategy. For every bin, an average accuracy and an average confidence are calculated using all samples inside. The difference between the average accuracy and the average confidence is the calibration error, which is denoted as calibration gap in~\cite{guo2017calibration}. The standard practice is to use $n$ equal-range bins where $n$ is chosen as 10 in the literature~\cite{guo2017calibration,heo2018uncertainty,kumar2018trainable,naeini2015obtaining,sander2019towards,seo2018confidence}. Specifically, the partition is defined as $B_j=[\frac{j-1}{n},\frac{j}{n}]$, $j=\{1,\dots,n\}$. It is possible to use different binning strategy such that these bins are not uniformly distributed and the definition of $B_j$ will change accordingly. Given $D_j=\{x_i|r_i\in B_j\}$, ECE and MCE are computed as $\hat{\text{ECE}}$ and $\hat{\text{MCE}}$ by:
%\small
\begin{align}
    \hat{\text{ECE}}(P_{\theta,\mathcal{D}}) =\frac{1}{|D|}\sum_{j=1}^{n}|\sum_{x_i\in D_j}c_i-\sum_{x_i\in D_j}r_i|
\end{align}%
\begin{align}
    \hat{\text{MCE}}(P_{\theta,\mathcal{D}}) =\max\frac{1}{|D_j|}|\sum_{x_i\in D_j}c_i-\sum_{x_i\in D_j}r_i|
\end{align}%
%\normalsize

It is worth mentioning that both ECE and MCE are proper scoring rules but not strictly proper scoring rules ~\cite{gneiting2007strictly}. However, even strictly proper scoring rules do not guarantee a reliable evaluation and comparison~\cite{merkle2013choosing}. Their potential issues are discussed in Section~\ref{sec:solution}.

In addition to the quantified metric, Reliability Diagram ~\cite{guo2017calibration,kumar2018trainable,sander2019towards,seo2018confidence} is used as a standard qualitative analysis tool in the literature. It plots the empirical accuracy in each bin and the calibration error. Such a diagram not only visualizes the calibration error at different confidence intervals but also shows how the ECE and the MCE are calculated.

\section{Evaluation Metrics: Issues \& Solutions}
\label{sec:solution}
In this section, we discuss some issues of the existing quality metrics for uncertainty estimation that may lead to unfair comparison or neglected problems, and then present new metrics to mitigate them. In all figure captions and tables, $\uparrow$ means the higher the better, and $\downarrow$ means the lower the better. 

\subsection{Selective Prediction}
\label{sec:selective}
 We remark that comparing AUROC and AUPR is fair only when the underlying prediction models are the same. There was no proper treatment used in the literature when comparing uncertainty estimation methods with different prediction models as shown in recent works~\cite{geifman2018bias,malinin2018predictive,mandelbaum2017distance}. Below we use an example to show that even a small difference in the underlying prediction model could make the comparison of AUROC and AUPR meaningless if not misleading. Then we discuss the advantage of the proposed AURC over AUROC and AUPR.  

In order to know the relative performance of models as a priori, we build an illustrative example based on a real-world network. We train a 100-layer DenseNet on Cifar10 with standard settings and denote it as DenseNet. For a given test dataset $X$, we further define $X_c$ and $X_p$ as $X_c = \{x_i| c_i=1\}$ and $X_p = \{x_i| r_i<t_m, x_i \in X_c\}$ where $t_m$ is a threshold such that $m=|X_p|$. Consider there is a network named DenseNet-m that makes the same predictions with DenseNet for all samples in $X$ except for that in $X_p$ and DenseNet-m has $c_i=0$ for all $x_i\in X_p$. In other words, DenseNet-m is the same with DenseNet except that DenseNet-m makes $m$ more wrong predictions in the most uncertain predictions out of the total $10^4$ samples. By varying the value of $m$, we get different DenseNet-m. Note that DenseNet is equivalent to DenseNet-0. With this setup, DenseNet-m with smaller $m$ has equal or higher accuracy than that with bigger $m$ at any threshold $t$ in selective prediction. Therefore, it is convincing to conclude that bigger $m$ indicates worse selective prediction quality. A proper evaluation metric is expected to correctly reflect the relative performance of different variants of DenseNet-m.

We plot ROC curves and PR curves of DenseNet-m with different values of $m$ in Figure~\ref{fig:CurveCompare}. Remember that both ROC curves and PR curves are the higher the better. Both curves suggest a questionable result that DenseNet-m with a bigger $m$ is better. The reason is that both AUROC and AUPR only measure a model's ability to distinguish correct and wrong predictions while assuming the numbers of correct and wrong predictions are the same.
An accuracy change of 0.2\% (comparing DenseNet-0 and DenseNet-20) could give a significant improvement on AUPR while the actual performance is getting worse.
Therefore, when the underlying prediction models are different, AUROC and AUPR fail to correctly reflect the model's actual performance change. 

%\vspace{-0.05in}
\begin{figure}[htb]
\centering
%\hspace*{-0.5em}
\subfloat[ROC curve$\uparrow$]{
\begin{minipage}[b]{0.33\linewidth}
\label{fig:CurveCompare1}
\includegraphics[height=0.8in]{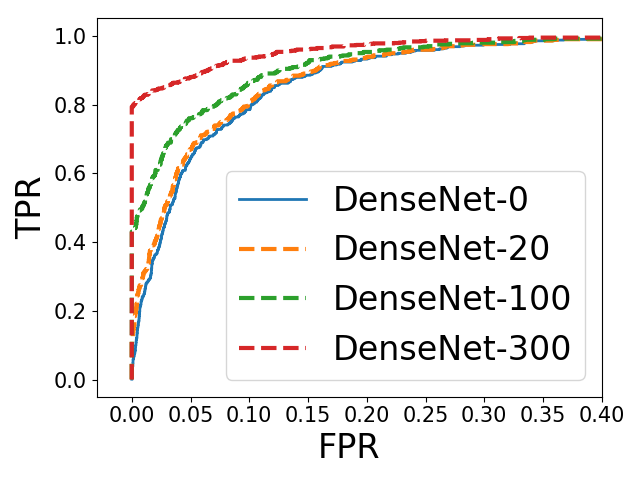}
\end{minipage}
}
\subfloat[PR curve$\uparrow$]{
\begin{minipage}[b]{0.33\linewidth}
\label{fig:CurveCompare2}
\includegraphics[height=0.8in]{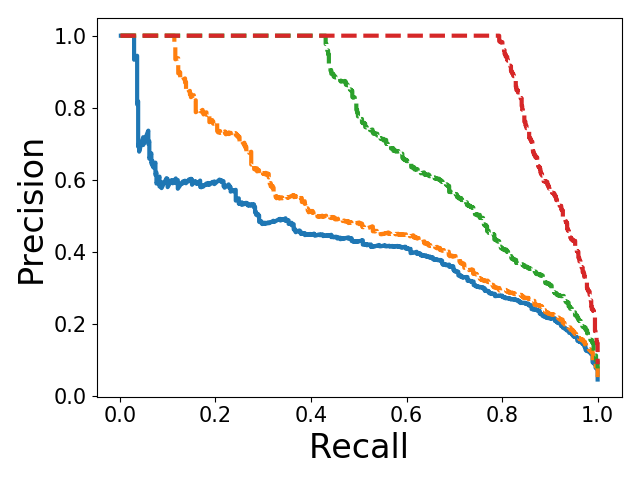}
\end{minipage}
}
\subfloat[RC curve $\downarrow$]{
\begin{minipage}[b]{0.33\linewidth}
\label{fig:CurveCompare3}
\includegraphics[height=0.8in]{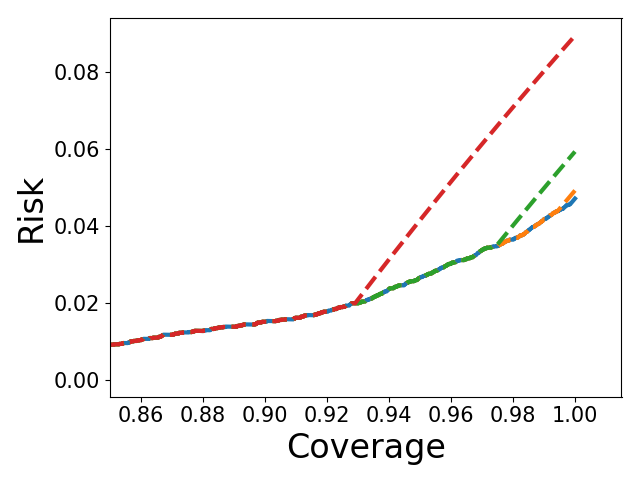}
\end{minipage}
}
%\quad
\caption{Evaluation curves of DenseNet-m. We term the misclassified/classified samples as positive/negative samples following the literature. }
\label{fig:CurveCompare}
%\vspace{-0.05in}
\end{figure}

The observation is also confirmed in quantitative results. In Table~\ref{tab:CurveCompare}, the quantitative results of both AUROC and AUPR suggest that DenseNet-m with bigger $m$ performs better which contradicts the prior knowledge.

\begin{table}[htb]
\caption{Quantitative comparison of AUROC, AUPR and AURC. Similar to the results in the literature, AUROC's change is relatively small because of the class imbalance. }
\centering
\begin{tabular}{lccccc}  
\toprule
Model  & Acc.$\uparrow$ & AUROC$\uparrow$  & AUPR$\uparrow$ & AURC$\downarrow$ \\
\midrule
DenseNet-0       & 95.30& 93.71  &  43.36 & 0.438     \\
DenseNet-20            & 95.10& 94.17  & 52.36  & 0.439  \\
DenseNet-100          & 94.30& 95.80  & 72.80  & 0.456  \\
DenseNet-300            & 92.30& 98.08  & 91.64  & 0.605  \\
\bottomrule
\end{tabular}
\label{tab:CurveCompare}
%\vspace{-0.15in}
\end{table}

In summary, evaluating models with AUPR and AUROC not only fails to provide a fair comparison, but also implicitly encourages the bad practice of reducing model accuracy in designing uncertainty estimation methods.

To mitigate this issue, we propose to do the evaluation with the Risk-Coverage (RC) curve instead. The coverage denotes the percentage of the input processed by the model without human intervention and the risk denotes the level of risk of these model prediction. Formally,
\begin{align}
    coverage=\frac{|X_h|}{|X|}\\
%\end{align}
%\begin{align}
    risk = \mathcal{L}(\hat{Y_h})
\end{align}
%\normalsize
where $\mathcal{L}$ is a loss function measuring the prediction quality. For classification, the 0/1 loss is commonly used~\cite{geifman2017selective} as it measures the classification accuracy.

The risk-coverage curve reflects the nature of the selective prediction very well by definition as the motivation of the selective prediction is to reduce the coverage of the model in order to achieve higher accuracy. 

 The RC curves of DenseNet-m are shown in Figure~\ref{fig:CurveCompare3}. Quantitative comparison of AUROC, AUPR, and AURC are shown in Table~\ref{tab:CurveCompare}. Note that although the RC curve has been used in the literature to demonstrate selective classification~\cite{geifman2017selective}, using AURC as the evaluation metric of uncertainty estimation for selective prediction is first proposed in this work. In both qualitative and quantitative results, only AURC gives the correct performance ranking which is in line with the prior knowledge. The reason is that AURC by definition naturally evaluates the combined results of prediction and uncertainty estimation without the assumption that the prediction models are the same. Without AURC, even if the accuracy, AUROC, and AUPR are reported together, it is still unknown that which model in Table~\ref{tab:CurveCompare} performs the best in selective prediction. Therefore, AURC provides the only reliable evaluation when the underlying prediction models are different.

We further show that AURC is still a good metric when the  accuracy of underlying prediction models are the same, because it correctly recognizes the better model just like AUROC and AUPR. The well-known connection and consistency between ROC curve and PR curve are established in ~\cite{davis2006relationship} by proving that the curve of one model dominates the curve of another model in ROC space, if and only if it also dominates the other in the PR space. In this paper, we show that the RC curve shares the same inherent connection with ROC curve and PR curve by giving Theorem~\ref{T2}. The proof is given in the supplementary.

\begin{theorem}
\label{T2}
  For any two models A and B of the same accuracy and their uncertainties measured by arbitrary methods (which can be different for A and B), the curve of A dominates that of B in the ROC space, if and only if the curve of A dominates that of B in the Risk-Coverage space. \end{theorem}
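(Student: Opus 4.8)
The plan is to reduce both notions of curve domination to a single combinatorial condition stated in terms of the confusion matrix, exploiting that the same-accuracy hypothesis fixes the total number of positives (misclassified samples) $P$ and negatives (correctly classified samples) $N_{neg}$ for both models. Following the convention in the figure, a positive is a misclassified sample and the detector flags a sample as positive exactly when its confidence falls below the threshold, i.e. when it is rejected. First I would record, at the operating point for a given threshold, the identities $\mathrm{TP}+\mathrm{FN}=P$ and $\mathrm{FP}+\mathrm{TN}=N_{neg}$, and observe that the number of rejected samples is $R=\mathrm{TP}+\mathrm{FP}$ while the number of accepted (covered) samples is $\mathrm{FN}+\mathrm{TN}=N-R$. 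With these I would express both coordinate systems in the shared variable $R$: in ROC space a point is $(\mathrm{FPR},\mathrm{TPR})=(\mathrm{FP}/N_{neg},\,\mathrm{TP}/P)$, and in Risk-Coverage space a point is $(\text{coverage},\text{risk})=((N-R)/N,\,(P-\mathrm{TP})/(N-R))$ under the $0/1$ loss.

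The crux is a bridge lemma: both ``A dominates B in ROC'' and ``A dominates B in RC'' are equivalent to the single statement that, for every admissible number of rejected samples $R$, model A captures at least as many misclassified samples as model B, i.e. $\mathrm{TP}_A(R)\ge\mathrm{TP}_B(R)$. The RC direction is immediate: since coverage is a strictly decreasing affine function of $R$, comparing risk at a common coverage is the same as comparing it at a common $R$, and because $P$ is shared, $\text{risk}=(P-\mathrm{TP})/(N-R)$ is a decreasing function of $\mathrm{TP}$ alone; hence $\text{risk}_A\le\text{risk}_B$ at every coverage iff $\mathrm{TP}_A(R)\ge\mathrm{TP}_B(R)$ at every $R$.

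The hard part will be the ROC direction, because ROC is naturally sliced by $\mathrm{FPR}$ (equivalently by $\mathrm{FP}$) whereas the bridge condition slices by $R=\mathrm{TP}+\mathrm{FP}$; these are different families of level sets, so the equivalence is not a term-by-term identity and must invoke the monotonicity of ROC curves. I would argue both implications by contraposition. For the bridge condition $\Rightarrow$ ROC domination: if some $\mathrm{FPR}=f$ had $\mathrm{TPR}_A(f)<\mathrm{TPR}_B(f)$, I would take $R$ to be B's rejection count at that operating point, assume toward contradiction $\mathrm{TP}_A(R)\ge\mathrm{TP}_B(R)$, deduce $\mathrm{FP}_A(R)\le fN_{neg}$, and then use monotonicity of A's curve to force $\mathrm{TPR}_A(f)\ge\mathrm{TPR}_B(f)$, contradicting the assumption. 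The reverse implication is symmetric: on the iso-$R$ line $N_{neg}\,\mathrm{FPR}+P\,\mathrm{TPR}=R$, if A had fewer true positives than B, then A's point would lie strictly below and to the right of B's, and combining monotonicity of B's curve with A's domination at A's own $\mathrm{FPR}$ again yields a contradiction. Chaining the two equivalences through the bridge condition proves the theorem.

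I would flag two points to handle carefully in the write-up. First, the entire reduction relies on $P$ being identical for A and B, which is precisely the same-accuracy hypothesis; without it $\text{risk}$ ceases to be a function of $\mathrm{TP}$ alone and the bridge collapses, which is consistent with the paper's broader message that AURC is required once accuracies differ. Second, ties in the confidence scores make the ROC and RC curves staircases rather than strictly monotone graphs, so I would phrase both ``domination'' and ``monotonicity'' in terms of the monotone nondecreasing correspondence traced by the operating points as $R$ runs over $0,1,\dots,N$; this keeps every inequality above valid at the corner points, which is where the comparisons actually take place.
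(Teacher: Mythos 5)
Your proof is correct and rests on the same engine as the paper's: the equal-accuracy hypothesis fixes the totals of positives and negatives, so at a common coverage (equivalently, a common rejection count $R$) each confusion matrix has a single remaining degree of freedom, and both the risk comparison and the ROC comparison are controlled by $\mathrm{TP}(R)$. The paper runs this as one direct contradiction --- pick a point $a$ on A's curve and $b$ on B's curve at equal coverage with $\mathrm{risk}_a<\mathrm{risk}_b$, chase the identities $\mathrm{FP}+\mathrm{TN}=\text{const}$ and $\mathrm{TP}+\mathrm{FN}=\text{const}$ to conclude $a$ lies strictly northwest of $b$ in ROC space, and declare this incompatible with dominance --- whereas you factor the equivalence through an explicit bridge condition ($\mathrm{TP}_A(R)\ge\mathrm{TP}_B(R)$ for all $R$) and prove two separate equivalences. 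The substance is identical, but your organization surfaces a step the paper leaves implicit: its closing sentence needs monotonicity of the ROC curve to upgrade ``one point of A is northwest of one point of B'' into a violation of curve dominance, which is exactly the monotonicity you invoke in both directions of your ROC--bridge equivalence. Your explicit treatment of ties and staircase curves is likewise more careful than the paper's, which speaks of ``points on the curve'' without addressing how the curves are compared at non-attained coverages or false-positive rates.
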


 Another issue with the evaluation practice we identified is the poor generality out of classification tasks. For example, the image segmentation quality cannot be properly evaluated by pixel-wise accuracy. Consequently, even if the underlying prediction models are the same, AUROC and AUPR still fail to accurately reflect the performance of selective prediction. In contrast, the AURC can be easily extended to this case by using a suitable $\mathcal{L}$ for domain-specific performance measure. For example, when measuring the image segmentation quality, the risk can be defined as $1-Dice$ where $Dice$ is a commonly used quality metric for image segmentation.

In selective prediction, each low confidence prediction leads to a special process such as processing by a bigger model, examining by human experts, or making conservative decisions by the controller. Such operation is usually ``expensive'' and thus the coverage directly determines the overall operation cost that is an important metric in a selective prediction application. However, this metric is not available from the conventional ROC curve or PR curve. In contrast, RC curve merges two accuracy axes to one and adds the cost axis to show the cost-performance trade-off which arguably makes it easier for human administrators to choose an operating point.

 In summary, using AURC as a primary evaluation metric has the following advantage. (i) when the underlying prediction models are the same, AURC is an effective quality metric to indicate the performance of selective prediction; (ii) when the prediction models are different which happens a lot in the literature due to the emerging trend of uncertainty-aware training~\cite{mandelbaum2017distance,geifman2018bias,malinin2018predictive,kumar2018trainable}, using AURC instead of AUPR and AUROC prevents unfair and potential misleading comparison. (iii) AURC can be generalized to distinct tasks with task-specific evaluation metrics while AUPR and AUROC cannot. (iv) AURC is an alternative optimization objective to directly maximize the performance of selective prediction and helps to avoid weighing multiple objective terms in related work~\cite{mandelbaum2017distance,malinin2018predictive,kumar2018trainable}. (v) AURC directly shows the cost-performance trade-off in selective prediction which is not visible in the conventional ROC curve or PR curve.

\subsection{Confidence Calibration}
\label{sec:calib}
The accuracy and reliability of Reliability Diagrams, ECE, and MCE highly depend on the underlying binning strategy, whose limitations are explained as below. 

\textbf{Undetectable Error.} \quad We use the same original DenseNet used above as a real-world example to show the problem of the commonly used ECE, MCE, and Reliability Diagrams. The maximum softmax probability~\cite{hendrycks2016baseline} is used as the confidence score. As shown in Figure~\ref{fig:RD1}, the calibration error on $[0.9,1]$ is as small as 0.0215 which means the average error between the confidence and accuracy is 2.15\%. One would expect that for confidence in this interval, the accuracy is very close to the confidence. However, as shown in Figure~\ref{fig:RD2}, for input samples with confidence in $[0.9,0.91]$, the accuracy is only 50\%. For input samples with confidence in $[0.9,0.96]$, the accuracy is lower than 73\%. This problem can mainly be attributed to the large bin range and the highly non-uniform distribution of confidence as shown in Figure~\ref{fig:RD1}. 

Most samples have a confidence score in $[0.98,1.0]$ and the calibration error on that range is small. Then an average view makes the high calibration error on $[0.9,0.96]$ undetectable by normal Reliability Diagrams, ECE, and MCE. Note that the big calibration error on $[0.9,0.96]$ by no means should be tolerated, because it has a higher sample density than all nine bins on its left and ignoring it may jeopardize mission-critical systems.

%\vspace{-0.05in}
\begin{figure}[htb]
\vspace{-0.05in}
\centering
%\hspace*{\fill}
\subfloat[Normal setting with 10 bins]{
\label{fig:RD1}
\begin{minipage}[b]{0.5\linewidth}
\centering
\includegraphics[height=1.2in]{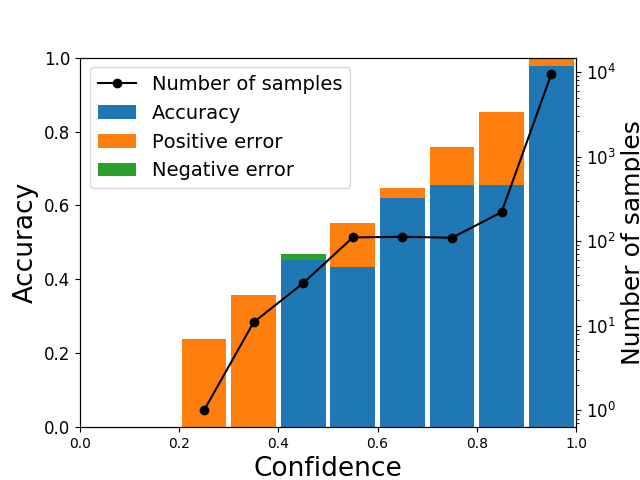}
\end{minipage}
}
\subfloat[Diagrams on \ensuremath [$0.9,1$\ensuremath {]} ]{
\begin{minipage}[b]{0.5\linewidth}
\label{fig:RD2}
\centering
\includegraphics[height=1.2in]{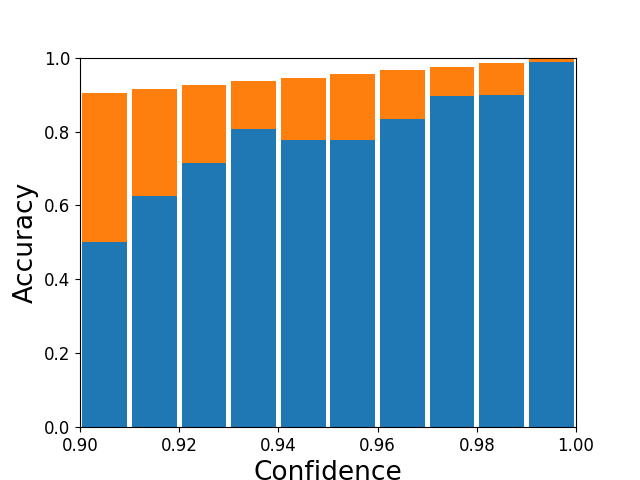}
\end{minipage}
}
%\hspace*{\fill}
\caption{Undetectable error in Reliability Diagrams. In all Reliability Diagrams, positive error means confidence is larger than accuracy.}
\label{fig:RD}
\end{figure}

\textbf{Internal Compensation.}  \quad Even if the confidence distribution is relatively uniform, we remark that ``internal compensation'' can happen inside a bin and the ECE obtained is overly optimistic. As can be seen from Figure~\ref{fig:RD1}, the error is not always positive or negative. In fact, the different sign of the error also exists inside a bin. This makes the computed ECE lower than a more accurate one computed based on a higher resolution. We conclude this effect in Proposition~\ref{T1} and the complete proof is provided in the supplementary.

\begin{proposition}
\label{T1}
  For any bin selection, $\hat{\text{ECE}}(P_{\theta,\mathcal{D}})=\text{ECE}(P_{\theta,\mathcal{D}})$ if and only if for any bin $B_j$,  $E_{P_{\theta,\mathcal{D}}(c|r_k)}[c]\geq r_k$ for all $r_k\in B_j$ or $E_{P_{\theta,\mathcal{D}}(c|r_k)}[c]\leq r_k$ for all $r_k\in B_j$. Otherwise, $\hat{\text{ECE}}(P_{\theta,\mathcal{D}})<\text{ECE}(P_{\theta,\mathcal{D}})$.
  \end{proposition}
With the assumption that $E_{P_{\theta,\mathcal{D}}(c|r_k)}[c]$ is available, the range of bins should be as small as possible to recover the actual ECE. However, $E_{P_{\theta,\mathcal{D}}(c|r_k)}[c]$ is only available with enough samples which leads to the problem of inaccurate accuracy estimation.

\textbf{Inaccurate Accuracy Estimation.} \quad
A relatively straightforward solution to the aforementioned problems is to increase the number of bins to get higher resolution on the confidence scores. However, using more bins does not always get better results. Even though $\frac{1}{|D_j|}\sum_{x_i\in D_j}c_i$ is an unbiased and consistent estimator of $E_{P_{\theta,\mathcal{D}}(c|r)}[c]$ for $x_i\in D_j$, with more bins $|D_j|$ becomes smaller and the inaccurate approximation of $E_{P_{\theta,\mathcal{D}}(c|r)}[c]$ leads to inaccurate ECE. 
In fact, another loophole of the Reliability Diagrams and MCE is that, $\frac{1}{|D_j|}\sum_{x_i\in D_j}c_i$ may not provide accurate estimation for $E_{P_{\theta,\mathcal{D}}(c|r)}[c]$ when $|D_j|$ is small. A real-world example is shown in Figure~\ref{fig:RDC1}. In order to make the low accuracy around 0.95 visible, the number of bins has to be increased from 10 to 50. The significant fluctuation is a result of the inaccurate accuracy when the number of samples in the bin is small.

We remark that although the inaccurate accuracy estimation can be solved if there are excessive samples, it cannot fix the undetectable error and internal compensation,  because the underlying reason is the highly nonuniform confidence distribution instead of limited samples. A seemingly feasible remedy is to use equal-size binning~\cite{nixon2019measuring,vaicenavicius2019evaluating}, where each bin has the same number of samples, instead of equal-range binning.  However, when the bin lies in a confidence region where samples are sparse, the resulting confidence range may be too large and less informative. On the other hand, when it lies in a region where samples are dense, the range of bins becomes too small and accuracy estimation is suboptimal. This can be seen from Figure~\ref{fig:RDC2} and Figure~\ref{fig:RDC3}, when 50 and 100 bins are used respectively. In both Figure~\ref{fig:RDC2} and Figure~\ref{fig:RDC3}, it can be seen that a very wide bin exists in the low confidence region while an excessive number of bins reside in high confidence region.

To tackle this challenge, we resort to an adaptive binning strategy, where the number of samples in a bin is adaptive to the distribution of the samples in the confidence range. We achieve a dynamic balance between the resolution and the accuracy estimation by associating the number of samples in each bin with the range of the bin. In this way, more samples can be included for better accuracy estimation when the samples are dense and fewer samples will be used to avoid too large range when the samples are sparse.
Specifically, we use $n=0.25\left(\frac{Z_{\alpha/2}}{\epsilon}\right)^2$ to estimate the number of samples needed to estimate the accuracy for each bin where $\epsilon$ is the error margin, $Z_{\alpha/2}$ is the $Z$-score of a standard normal distribution and $1-\alpha$ is the confidence interval. Even though there are still two hyper-parameters, we find that the result is not sensitive to these parameters in a wide range due to its high robustness. We use an 80\% confidence interval and let $\epsilon$ equal to the width of the confidence range of the bin in all experiments. We denote the resulting new adaptive metrics as AECE and AMCE in the rest of this paper. To make it easy for other researchers to use this adaptive binning, we provide the details and our implementation as an open-source tool at \url{https://github.com/yding5/AdaptiveBinning}. The computation overhead is minimal and the complexity is still $\mathcal{O}(|D|)$, so there is no scalability issue. The result on the same network is shown in Figure~\ref{fig:RDC4}, which achieves the best results in terms of capturing the calibration error and alleviating all the aforementioned problems. More examples of adaptive binning are shown in the supplementary. We hope researchers can consider to use this new quality metric in confidence calibration in the future.

\begin{figure}[htb]
\vspace{-0.05in}
\centering
%\hspace*{-2.0em}
\subfloat[50 equal-range bins]{
\begin{minipage}[b]{0.5\linewidth}
\label{fig:RDC1}
\includegraphics[height=1.2in]{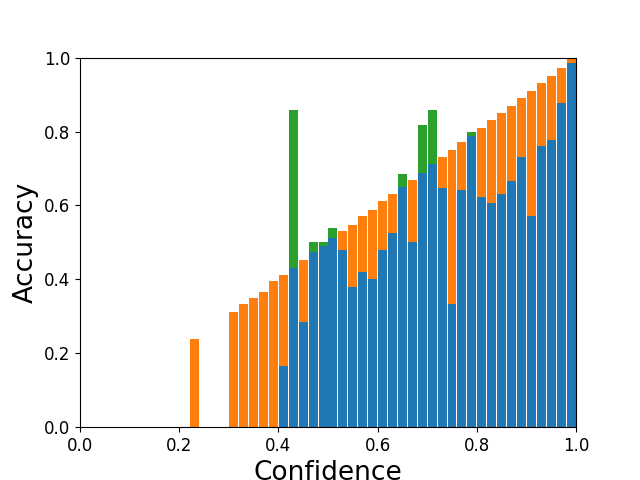}
\end{minipage}
}%\hspace*{-0.3em}
\subfloat[50 equal-size bins]{
\begin{minipage}[b]{0.5\linewidth}
\label{fig:RDC2}
\includegraphics[height=1.2in]{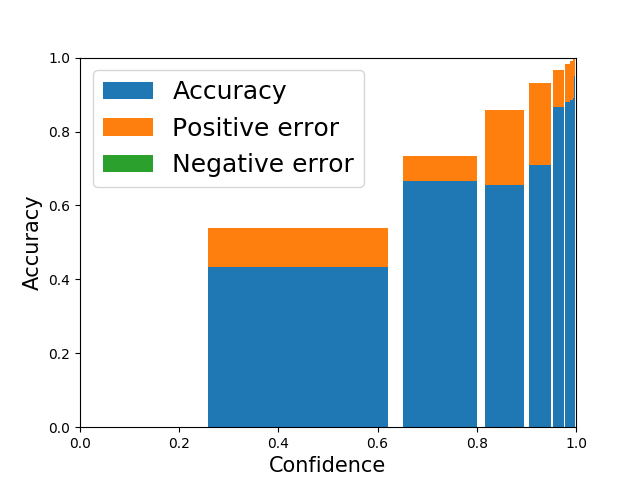}
\end{minipage}
}\\
%\hspace*{-0.3em}
\subfloat[100 equal-size bins]{
\begin{minipage}[b]{0.5\linewidth}
\label{fig:RDC3}
\includegraphics[height=1.2in]{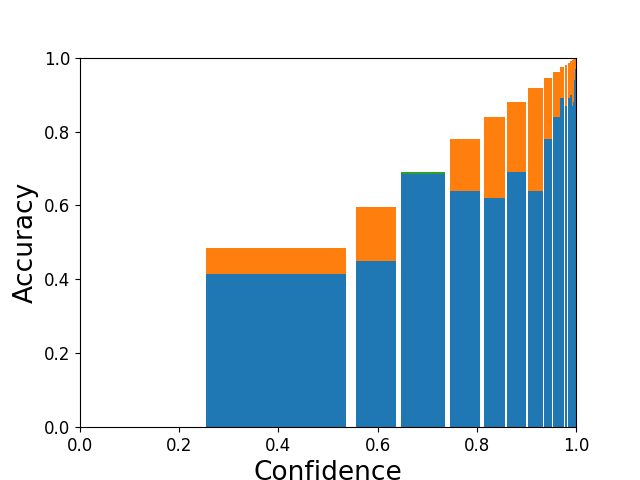}
\end{minipage}
}%\hspace*{-0.3em}
\subfloat[Adaptive binning]{
\begin{minipage}[b]{0.5\linewidth}
\label{fig:RDC4}
\includegraphics[height=1.2in]{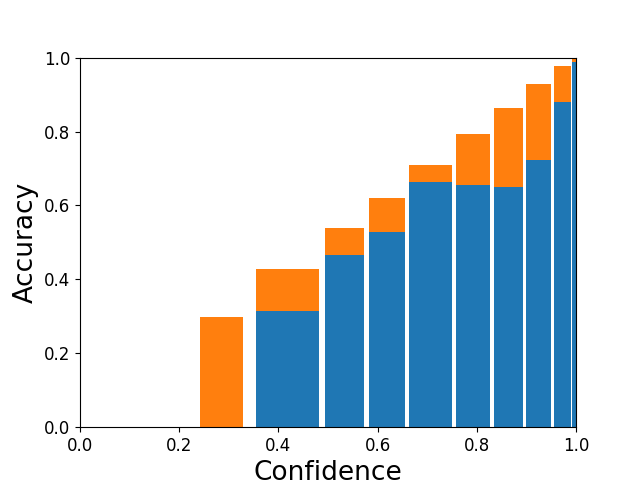}
\end{minipage}
}
%\quad
\caption{Reliability Diagrams of various binning methods.}
\label{fig:RDC}
\vspace{-0.04in}
\end{figure}

Theoretically, the issues discussed above can also happen in other probabilistic forecast problems, \eg weather forecast~\cite{lai2011evaluating} where similar evaluation metrics such as ECE and Brier Score are used~\cite{jolliffe2012forecast,zamo2018estimation}. Furthermore, consistency bar is used to partially solve the inaccurate accuracy estimation issue by indicating the fluctuations of the observed frequencies (accuracy in our context) caused by the limited samples in each bin with the \textit{consistency resampling} technique~\cite{bickel2008verification}. However, consistency bar can only indicate a reasonable fluctuation range of the frequencies that a reliable calibration would likely fall into. It does not change the binning and cannot help with the poor estimation quality when the number of samples is small or the range is not appropriate. The reason that these solutions cannot fully solve the problem has two parts. First, the neural network uncertainty can be more non-uniformly distributed compared with the weather forecast, especially when the weather forecast confidence scores are clustered to 11 uniformly distributed options in $[0,1]$~\cite{brocker2007increasing}. Second, uncertainty for neural networks are more critical, especially in high confidence area \eg the difference between the accuracy of 0.95 and 0.99 can be much more significant in pedestrian detection in an autonomous vehicle than that in the weather forecast.% discrete

\section{Effect of Model Complexity on Uncertainty Estimation}
\label{sec:exp}
In this section, we apply the proposed evaluation metrics in a series of experiments to validate the effectiveness and robustness of our proposed evaluation metrics and provide the first empirical study on how uncertainty-related model performance is affected by model complexity.  

\subsection{Relation Between the Two Use Cases}
\label{sec:exp1}
Before delving into the effect of model complexity on selective prediction and confidence calibration, it is interesting to analyze whether the effect would be similar for the two cases, which may help to justify the different trends observed from the experiments in Section~\ref{sec:exp2}.

When the prediction model is given, the performance of selective prediction is solely determined by the relative ranking of $r_i$ and $r_j$ for $c_i=1$ and $c_j=0$. The specific values of $r$ do not matter due to the thresholding mechanism. Even though a good threshold may be unknown, existing statistical methods are available for finding a desirable threshold~\cite{geifman2017selective}. In contrast, for confidence calibration, the specific value of $r$ does matter. The quality of the given confidence score is evaluated based on the difference between the confidence score and the expected accuracy of the samples with this score. As such, we expect that \textbf{performance of selective prediction and confidence calibration are not necessarily correlated.} One can construct illustrative examples to show that the confidence score $r$ can be perfect for one case but bad for the other. When $r_i=0.5$ for all $x_i\in\mathcal{D}$ and $\frac{\sum _{x_i\in\mathcal{D}}c_i}{|\mathcal{D}|}=\frac{1}{2}$, we have $\text{ECE}(P_{\theta,\mathcal{D}})=0$ but selective prediction is not feasible. When $r\in\{0.9,1\}$, $c_i=1$ for all $ r_i=1$ and $c_j=0$ for all $ r_j=0.9$, selective prediction achieves the best possible result but $\text{ECE}(P_{\theta,\mathcal{D}})$ is as high as 0.9.

 Proposition~\ref{TU} further shows that a confidence estimation is perfect in both cases,  if and only if it perfectly knows the correctness for each prediction which is almost impossible in practice. The proof is given in the supplementary.
\begin{proposition}
\label{TU}
The uncertainty estimation $r$ is perfect for both selective prediction and confidence calibration, if and only if for all samples $r\in\{0,1\}$, $E_{P_{\theta,\mathcal{D}}(c|r=0)}[c]=0$, and $E_{P_{\theta,\mathcal{D}}(c|r=1)}[c]=1$.
  \end{proposition}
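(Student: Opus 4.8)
The plan is to prove the biconditional after fixing precise meanings for ``perfect'' in each use case. I take perfect confidence calibration to be the vanishing of the exact (unbinned) calibration error, i.e. $E_{P_{\theta,\mathcal{D}}(c|r=v)}[c]=v$ for every $v$ in the support $S$ of $P_{\theta,\mathcal{D}}(r)$, which is exactly the condition $\text{ECE}(P_{\theta,\mathcal{D}})=0$ from the definition in Section~\ref{sec:setting}. For selective prediction I use the ranking characterization recalled earlier in this section: the score is perfect exactly when it ranks every correct prediction above every wrong one, equivalently when the conditional laws $P_{\theta,\mathcal{D}}(r|c=1)$ and $P_{\theta,\mathcal{D}}(r|c=0)$ have disjoint supports, so that a threshold attains zero risk at coverage equal to the accuracy.

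For the ($\Leftarrow$) direction I would substitute the hypotheses directly. If $S\subseteq\{0,1\}$ with $E_{P_{\theta,\mathcal{D}}(c|r=0)}[c]=0$ and $E_{P_{\theta,\mathcal{D}}(c|r=1)}[c]=1$, then $E_{P_{\theta,\mathcal{D}}(c|r=v)}[c]=v$ holds at both $v=0$ and $v=1$, giving perfect calibration. Moreover $E_{P_{\theta,\mathcal{D}}(c|r=1)}[c]=1$ forces $c=1$ almost surely when $r=1$, and $E_{P_{\theta,\mathcal{D}}(c|r=0)}[c]=0$ forces $c=0$ when $r=0$; hence every correct sample carries confidence $1$ and every wrong one confidence $0$, so thresholding at any $t\in(0,1)$ separates them and selective prediction is ideal.

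The substance lies in ($\Rightarrow$). Assuming both properties, perfect calibration already gives $E_{P_{\theta,\mathcal{D}}(c|r=v)}[c]=v$ on $S$. The decisive step is to force $S\subseteq\{0,1\}$: if some $v\in(0,1)$ belonged to $S$, then $E_{P_{\theta,\mathcal{D}}(c|r=v)}[c]=v\in(0,1)$ would make both $c=1$ and $c=0$ occur with positive probability at $r=v$, placing $v$ in the support of both $P_{\theta,\mathcal{D}}(r|c=1)$ and $P_{\theta,\mathcal{D}}(r|c=0)$ and contradicting the disjointness required by perfect selective prediction. Therefore $S\subseteq\{0,1\}$, and evaluating calibration at $0$ and $1$ yields exactly $E_{P_{\theta,\mathcal{D}}(c|r=0)}[c]=0$ and $E_{P_{\theta,\mathcal{D}}(c|r=1)}[c]=1$.

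I expect the main obstacle to be not the core contradiction but making the two notions of ``perfect'' precise and mutually compatible, especially in a possibly continuous setting. I would need to justify that perfect selective prediction is equivalent to disjoint conditional supports of $r$ (matching the rank-based description stated above), and to handle confidence values with $P_{\theta,\mathcal{D}}(r=v)=0$ by arguing at the level of these conditional distributions rather than pointwise. I would also dispose of the degenerate regimes—every prediction correct, or every prediction wrong—in which one of $E_{P_{\theta,\mathcal{D}}(c|r=1)}[c]=1$ or $E_{P_{\theta,\mathcal{D}}(c|r=0)}[c]=0$ holds vacuously, checking that the biconditional remains correct there.
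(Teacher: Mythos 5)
Your proof is correct and takes essentially the same route as the paper's: the $(\Leftarrow)$ direction by direct substitution, and the $(\Rightarrow)$ direction by deducing $E_{P_{\theta,\mathcal{D}}(c|r=v)}[c]=v$ from zero calibration error and then ruling out any $v\in(0,1)$ in the support of $r$ because such a $v$ would place both correct and wrong predictions at the same confidence value, contradicting perfect separation. The only difference is cosmetic: you phrase perfect selective prediction as disjointness of the conditional supports of $r$ given $c$ (strictly speaking only a necessary consequence of perfect threshold separation, but it is the direction your argument actually uses), whereas the paper states it directly as $r_a>r_b$ whenever $c_a=1$ and $c_b=0$.
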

  The results imply that given limited learning capability, there exists a trade-off between two aspects of uncertainty estimation and the model should be optimized for the specific use case.

\subsection{Experiment Results}
\label{sec:exp2}
We evaluate the uncertainty estimation quality of a series of models for selective prediction and confidence calibration in image classification and medical image segmentation.
There are different uncertainty estimation methods available. In this work, we use maximum softmax probability~\cite{hendrycks2016baseline} and temperature scaling~\cite{guo2017calibration} for selective prediction and confidence calibration as they are popular and shown to be competitive with more complex approaches~\cite{snoek2019can,chen2018confidence,geifman2019selectivenet}. 

We first evaluate two popular networks DenseNet~\cite{huang2017densely} and WideResNet~\cite{zagoruyko2016wide} on Cifar10 and Cifar100~\cite{krizhevsky2009learning} to cover different levels of difficulty and accuracy. For DenseNet, we keep the growth rate at 12 and reduce its depth from 100 to 10. For WideResNet, we change the widen factor of a 16-layer network and a 28-layer network for a comparable number of parameters with DenseNet. 

\textbf{The performance of selective prediction increases with model size.} For selective prediction, we first show how the conventional AUPR metric changes with the model size, and the results are shown in Figure~\ref{fig:F_AUPR} and Figure~\ref{fig:F_AUPR_C10}. It is shown that AUPR decreases with the model size, indicating networks' decreasing capability to differentiate wrongly predicted samples and correctly prediction samples. The reason is that wrong predictions with high confidence scores, an issue known as over-confidence in high capacity neural networks~\cite{lee2017confident}, are usually caused by inherent learning limitation or data similarity instead of network capacity. As a result, although higher capacity models have fewer wrong predictions, they are increasingly concentrated in the high confidence area, which in turn makes accurate uncertainty estimation harder. However, this is against the intuition that larger networks learn probability distribution better and thus behave better in uncertainty estimation. The reason is that the impact of the original full-coverage accuracy in selective prediction is not taken into consideration. If we use the proposed AURC instead as shown in Figure~\ref{fig:F_AURC} and Figure~\ref{fig:F_AURC_C10}, the estimation quality becomes consistent with common expectation, showing the increased performance of selective prediction with increasing model size.

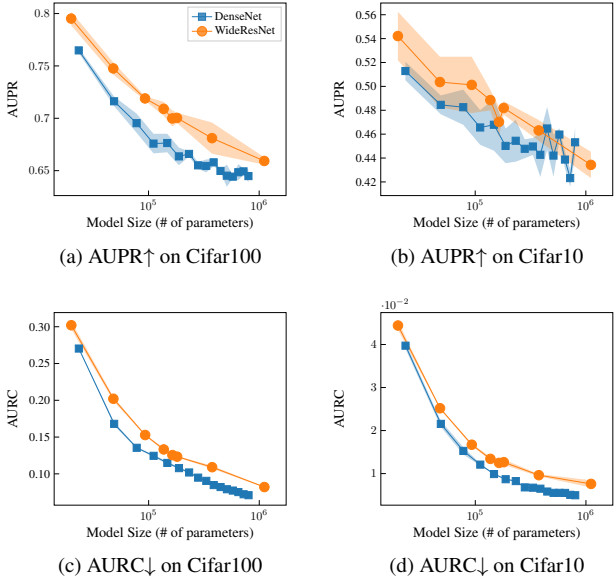
\begin{figure}[htb]
%\vspace{-0.15in}
\vspace{-0.05in}
\centering
%\hspace*{-2.2em}
\subfloat[AUPR$\uparrow$ on Cifar100]{
\begin{minipage}[b]{0.5\linewidth}
\label{fig:F_AUPR}
\scalebox{0.45}{% This file was created by tikzplotlib v0.8.7.
\begin{tikzpicture}
\pgfplotsset{every axis/.append style={
                    label style={font=\large}
                    }}
\definecolor{color0}{rgb}{0.12156862745098,0.466666666666667,0.705882352941177}
\definecolor{color1}{rgb}{1,0.498039215686275,0.0549019607843137}

\begin{axis}[
legend cell align={left},
legend style={fill opacity=0.8, draw opacity=1, text opacity=1, draw=white!80.0!black},
log basis x={10},
tick align=inside,
%tick pos=both,
tick pos=left,
x grid style={white!69.01960784313725!black},
xlabel={Model Size (\# of parameters)},
xmin=14000, xmax=1750000,
xmode=log,
xtick style={color=black},
xtick={1000,10000,100000,1000000,10000000,100000000},
xticklabels={\(\displaystyle {10^{3}}\),\(\displaystyle {10^{4}}\),\(\displaystyle {10^{5}}\),\(\displaystyle {10^{6}}\),\(\displaystyle {10^{7}}\),\(\displaystyle {10^{8}}\)},
y grid style={white!69.01960784313725!black},
ylabel={AUPR},
ymin=0.626189567772737, ymax=0.810066409886069,
ytick style={color=black}
]
\path [fill=color0, fill opacity=0.3]
(axis cs:800000,0.650799547949574)
--(axis cs:800000,0.638840991172102)
--(axis cs:721900,0.646201978820074)
--(axis cs:647500,0.64071135518452)
--(axis cs:576900,0.642417947073026)
--(axis cs:510000,0.634547606050615)
--(axis cs:447000,0.64889801912557)
--(axis cs:387700,0.654345385111464)
--(axis cs:332100,0.648358239734892)
--(axis cs:280300,0.651721226431571)
--(axis cs:232300,0.66348396573067)
--(axis cs:188100,0.655161085943575)
--(axis cs:147600,0.66752608006476)
--(axis cs:110900,0.666482531851527)
--(axis cs:78000,0.686163462071873)
--(axis cs:48800,0.711621353061349)
--(axis cs:23400,0.762485961875304)
--(axis cs:23400,0.76715182219476)
--(axis cs:23400,0.76715182219476)
--(axis cs:48800,0.720800479592504)
--(axis cs:78000,0.704490891637282)
--(axis cs:110900,0.685209173218087)
--(axis cs:147600,0.685214408441315)
--(axis cs:188100,0.672069305814229)
--(axis cs:232300,0.668471994222346)
--(axis cs:280300,0.658395557767777)
--(axis cs:332100,0.660653604988516)
--(axis cs:387700,0.661530708950965)
--(axis cs:447000,0.650555061911937)
--(axis cs:510000,0.655855580017163)
--(axis cs:576900,0.646114238142097)
--(axis cs:647500,0.656231338158039)
--(axis cs:721900,0.652414942392535)
--(axis cs:800000,0.650799547949574)
--cycle;

\path [fill=color1, fill opacity=0.3]
(axis cs:1112000,0.662766491621998)
--(axis cs:1112000,0.655718553886382)
--(axis cs:375000,0.666133795976614)
--(axis cs:181000,0.695462937205578)
--(axis cs:164000,0.692200754422698)
--(axis cs:137000,0.702299497982434)
--(axis cs:93000,0.716863640780574)
--(axis cs:48000,0.741754101731269)
--(axis cs:20000,0.788673354952783)
--(axis cs:20000,0.80170837160819)
--(axis cs:20000,0.80170837160819)
--(axis cs:48000,0.753422262692319)
--(axis cs:93000,0.721032797503498)
--(axis cs:137000,0.715728111654488)
--(axis cs:164000,0.707536702605027)
--(axis cs:181000,0.705204693123578)
--(axis cs:375000,0.695923252361353)
--(axis cs:1112000,0.662766491621998)
--cycle;

\addplot [semithick, color0, mark=square*, mark size=3, mark options={solid}]
table {%
800000 0.644820269560838
721900 0.649308460606304
647500 0.64847134667128
576900 0.644266092607562
510000 0.645201593033889
447000 0.649726540518753
387700 0.657938047031214
332100 0.654505922361704
280300 0.655058392099674
232300 0.665977979976508
188100 0.663615195878902
147600 0.676370244253037
110900 0.675845852534807
78000 0.695327176854577
48800 0.716210916326926
23400 0.764818892035032
};
\addlegendentry{DenseNet}
\addplot [semithick, color1, mark=*, mark size=4, mark options={solid}]
table {%
1112000 0.65924252275419
375000 0.681028524168983
181000 0.700333815164578
164000 0.699868728513863
137000 0.709013804818461
93000 0.718948219142036
48000 0.747588182211794
20000 0.795190863280487
};
\addlegendentry{WideResNet}
\end{axis}

\end{tikzpicture}}
\end{minipage}
}%\hspace*{-0.3em}
\subfloat[AUPR$\uparrow$ on Cifar10]{
\begin{minipage}[b]{0.5\linewidth}
\label{fig:F_AUPR_C10}
\scalebox{0.45}{% This file was created by tikzplotlib v0.8.7.
\begin{tikzpicture}
\pgfplotsset{every axis/.append style={
                    label style={font=\large}
                    }}
\definecolor{color0}{rgb}{0.12156862745098,0.466666666666667,0.705882352941177}
\definecolor{color1}{rgb}{1,0.498039215686275,0.0549019607843137}

\begin{axis}[
log basis x={10},
tick align=inside,
%tick pos=both,
tick pos=left,
x grid style={white!69.01960784313725!black},
xlabel={Model Size (\# of parameters)},
xmin=14000, xmax=1750000,
xmode=log,
xtick style={color=black},
xtick={1000,10000,100000,1000000,10000000,100000000},
xticklabels={\(\displaystyle {10^{3}}\),\(\displaystyle {10^{4}}\),\(\displaystyle {10^{5}}\),\(\displaystyle {10^{6}}\),\(\displaystyle {10^{7}}\),\(\displaystyle {10^{8}}\)},
y grid style={white!69.01960784313725!black},
ylabel={AUPR},
ymin=0.408562731194153, ymax=0.569807020055292,
ytick style={color=black},
ytick={0.4,0.42,0.44,0.46,0.48,0.5,0.52,0.54,0.56,0.58},
yticklabels={0.40,0.42,0.44,0.46,0.48,0.50,0.52,0.54,0.56,0.58}
]
\path [fill=color0, fill opacity=0.3]
(axis cs:800000,0.465187415159157)
--(axis cs:800000,0.441193791600796)
--(axis cs:721900,0.415892017051478)
--(axis cs:647500,0.431319609233375)
--(axis cs:576900,0.454400362334466)
--(axis cs:510000,0.424406345073552)
--(axis cs:447000,0.446660133180795)
--(axis cs:387700,0.424173538153128)
--(axis cs:332100,0.442561916483806)
--(axis cs:280300,0.439838552271491)
--(axis cs:232300,0.436954280042097)
--(axis cs:188100,0.435693849783989)
--(axis cs:147600,0.446183244554587)
--(axis cs:110900,0.45106073275777)
--(axis cs:78000,0.467812002645899)
--(axis cs:48800,0.476678079499507)
--(axis cs:23400,0.505049056750171)
--(axis cs:23400,0.520647511075249)
--(axis cs:23400,0.520647511075249)
--(axis cs:48800,0.492404741619352)
--(axis cs:78000,0.49729461324639)
--(axis cs:110900,0.480125552731154)
--(axis cs:147600,0.489549723060956)
--(axis cs:188100,0.46465268190827)
--(axis cs:232300,0.471993294300061)
--(axis cs:280300,0.455677554915447)
--(axis cs:332100,0.456799787812045)
--(axis cs:387700,0.461295938247565)
--(axis cs:447000,0.482831613108413)
--(axis cs:510000,0.459742277685188)
--(axis cs:576900,0.464947952452494)
--(axis cs:647500,0.446123138465527)
--(axis cs:721900,0.430513296997986)
--(axis cs:800000,0.465187415159157)
--cycle;

\path [fill=color1, fill opacity=0.3]
(axis cs:1112000,0.445312843722866)
--(axis cs:1112000,0.423237489222752)
--(axis cs:375000,0.454381489236378)
--(axis cs:181000,0.476380726854636)
--(axis cs:164000,0.457707168357092)
--(axis cs:137000,0.479848548562571)
--(axis cs:93000,0.477655277609522)
--(axis cs:48000,0.482425934028783)
--(axis cs:20000,0.52180701526232)
--(axis cs:20000,0.562477734197968)
--(axis cs:20000,0.562477734197968)
--(axis cs:48000,0.524841958913509)
--(axis cs:93000,0.524824575710978)
--(axis cs:137000,0.497160770103356)
--(axis cs:164000,0.482810490675666)
--(axis cs:181000,0.487483637888488)
--(axis cs:375000,0.471857311146114)
--(axis cs:1112000,0.445312843722866)
--cycle;

\addplot [semithick, color0, mark=square*, mark size=3, mark options={solid}]
table {%
800000 0.453190603379977
721900 0.423202657024732
647500 0.438721373849451
576900 0.45967415739348
510000 0.44207431137937
447000 0.464745873144604
387700 0.442734738200346
332100 0.449680852147925
280300 0.447758053593469
232300 0.454473787171079
188100 0.450173265846129
147600 0.467866483807772
110900 0.465593142744462
78000 0.482553307946145
48800 0.48454141055943
23400 0.51284828391271
};
\addplot [semithick, color1, mark=*, mark size=4, mark options={solid}]
table {%
1112000 0.434275166472809
375000 0.463119400191246
181000 0.481932182371562
164000 0.470258829516379
137000 0.488504659332963
93000 0.50123992666025
48000 0.503633946471146
20000 0.542142374730144
};
\end{axis}

\end{tikzpicture}}
\end{minipage}
}\\%\hspace*{-0.3em}
\subfloat[AURC$\downarrow$ on Cifar100]{
\begin{minipage}[b]{0.5\linewidth}
\label{fig:F_AURC}
\scalebox{0.45}{% This file was created by tikzplotlib v0.8.7.
\begin{tikzpicture}
\pgfplotsset{every axis/.append style={
                    label style={font=\large}
                    }}
\definecolor{color0}{rgb}{0.12156862745098,0.466666666666667,0.705882352941177}
\definecolor{color1}{rgb}{1,0.498039215686275,0.0549019607843137}

\begin{axis}[
log basis x={10},
tick align=inside,
%tick pos=both,
tick pos=left,
x grid style={white!69.01960784313725!black},
xlabel={Model Size (\# of parameters)},
xmin=14000, xmax=1750000,
xmode=log,
xtick style={color=black},
xtick={1000,10000,100000,1000000,10000000,100000000},
xticklabels={\(\displaystyle {10^{3}}\),\(\displaystyle {10^{4}}\),\(\displaystyle {10^{5}}\),\(\displaystyle {10^{6}}\),\(\displaystyle {10^{7}}\),\(\displaystyle {10^{8}}\)},
y grid style={white!69.01960784313725!black},
ylabel={AURC},
ymin=0.0574308486827997, ymax=0.319231277003934,
ytick style={color=black},
ytick={0.05,0.1,0.15,0.2,0.25,0.3,0.35},
yticklabels={0.05,0.10,0.15,0.20,0.25,0.30,0.35}
]
\path [fill=color0, fill opacity=0.3]
(axis cs:800000,0.0726156203903154)
--(axis cs:800000,0.0693308681519422)
--(axis cs:721900,0.0716577900691246)
--(axis cs:647500,0.0735451733504976)
--(axis cs:576900,0.0767681669893746)
--(axis cs:510000,0.0781261807577962)
--(axis cs:447000,0.0797102606150418)
--(axis cs:387700,0.0833052376809248)
--(axis cs:332100,0.0888172305522539)
--(axis cs:280300,0.0939254129476911)
--(axis cs:232300,0.0998797600137231)
--(axis cs:188100,0.10665409660288)
--(axis cs:147600,0.113099631766896)
--(axis cs:110900,0.123038736918012)
--(axis cs:78000,0.134339249965882)
--(axis cs:48800,0.165527375161107)
--(axis cs:23400,0.269817485837882)
--(axis cs:23400,0.270882615204674)
--(axis cs:23400,0.270882615204674)
--(axis cs:48800,0.170452341247726)
--(axis cs:78000,0.136541532410875)
--(axis cs:110900,0.12600855030627)
--(axis cs:147600,0.116624508910066)
--(axis cs:188100,0.10892017001881)
--(axis cs:232300,0.103855330406935)
--(axis cs:280300,0.0958024779134216)
--(axis cs:332100,0.0921281976741489)
--(axis cs:387700,0.0862030816587315)
--(axis cs:447000,0.0844255484200892)
--(axis cs:510000,0.0799783676390613)
--(axis cs:576900,0.0777854603159441)
--(axis cs:647500,0.0769073223207773)
--(axis cs:721900,0.0728399470278558)
--(axis cs:800000,0.0726156203903154)
--cycle;

\path [fill=color1, fill opacity=0.3]
(axis cs:1112000,0.0827946147466608)
--(axis cs:1112000,0.0813549570445733)
--(axis cs:375000,0.107122210214346)
--(axis cs:181000,0.122291506438101)
--(axis cs:164000,0.122459665456)
--(axis cs:137000,0.131188359113005)
--(axis cs:93000,0.151955043071892)
--(axis cs:48000,0.19918607418266)
--(axis cs:20000,0.296719086107527)
--(axis cs:20000,0.307331257534792)
--(axis cs:20000,0.307331257534792)
--(axis cs:48000,0.204957022884683)
--(axis cs:93000,0.153633890908901)
--(axis cs:137000,0.135211881730574)
--(axis cs:164000,0.128559092423076)
--(axis cs:181000,0.124427259837526)
--(axis cs:375000,0.110988551088686)
--(axis cs:1112000,0.0827946147466608)
--cycle;

\addplot [semithick, color0, mark=square*, mark size=3, mark options={solid}]
table {%
800000 0.0709732442711288
721900 0.0722488685484902
647500 0.0752262478356374
576900 0.0772768136526594
510000 0.0790522741984288
447000 0.0820679045175655
387700 0.0847541596698282
332100 0.0904727141132014
280300 0.0948639454305563
232300 0.101867545210329
188100 0.107787133310845
147600 0.114862070338481
110900 0.124523643612141
78000 0.135440391188378
48800 0.167989858204417
23400 0.270350050521278
};
\addplot [semithick, color1, mark=*, mark size=4, mark options={solid}]
table {%
1112000 0.0820747858956171
375000 0.109055380651516
181000 0.123359383137813
164000 0.125509378939538
137000 0.13320012042179
93000 0.152794466990397
48000 0.202071548533672
20000 0.302025171821159
};
\end{axis}

\end{tikzpicture}}
\end{minipage}
}%\hspace*{-0.3em}
\subfloat[AURC$\downarrow$ on Cifar10]{
\begin{minipage}[b]{0.5\linewidth}
\label{fig:F_AURC_C10}
\scalebox{0.45}{% This file was created by tikzplotlib v0.8.7.
\begin{tikzpicture}
\pgfplotsset{every axis/.append style={
                    label style={font=\large}
                    }}
\definecolor{color0}{rgb}{0.12156862745098,0.466666666666667,0.705882352941177}
\definecolor{color1}{rgb}{1,0.498039215686275,0.0549019607843137}

\begin{axis}[
log basis x={10},
tick align=inside,
%tick pos=both,
tick pos=left,
x grid style={white!69.01960784313725!black},
xlabel={Model Size (\# of parameters)},
xmin=14000, xmax=1750000,
xmode=log,
xtick style={color=black},
xtick={1000,10000,100000,1000000,10000000,100000000},
xticklabels={\(\displaystyle {10^{3}}\),\(\displaystyle {10^{4}}\),\(\displaystyle {10^{5}}\),\(\displaystyle {10^{6}}\),\(\displaystyle {10^{7}}\),\(\displaystyle {10^{8}}\)},
y grid style={white!69.01960784313725!black},
ylabel={AURC},
ymin=0.00266165609756023, ymax=0.0474183076195497,
ytick style={color=black}
]
\path [fill=color0, fill opacity=0.3]
(axis cs:800000,0.00523194838114383)
--(axis cs:800000,0.00469604934855975)
--(axis cs:721900,0.00488232193625058)
--(axis cs:647500,0.00527592808296658)
--(axis cs:576900,0.00531042702635765)
--(axis cs:510000,0.00537982969153116)
--(axis cs:447000,0.00561682549600869)
--(axis cs:387700,0.00614673074545421)
--(axis cs:332100,0.00668904423436316)
--(axis cs:280300,0.00659296155105351)
--(axis cs:232300,0.00811745549245328)
--(axis cs:188100,0.00851457922463002)
--(axis cs:147600,0.00980155777715518)
--(axis cs:110900,0.0118267595292171)
--(axis cs:78000,0.0144088403043476)
--(axis cs:48800,0.020843938038873)
--(axis cs:23400,0.0389533490367722)
--(axis cs:23400,0.0405049613140278)
--(axis cs:23400,0.0405049613140278)
--(axis cs:48800,0.0222709184448461)
--(axis cs:78000,0.0161397039708498)
--(axis cs:110900,0.0123387535799098)
--(axis cs:147600,0.00997909831098241)
--(axis cs:188100,0.00886343550982689)
--(axis cs:232300,0.00840279888720376)
--(axis cs:280300,0.00699566718653261)
--(axis cs:332100,0.00675582488765889)
--(axis cs:387700,0.00683524918643812)
--(axis cs:447000,0.00604199462703265)
--(axis cs:510000,0.0056694786463073)
--(axis cs:576900,0.00573952839198395)
--(axis cs:647500,0.00573779824811005)
--(axis cs:721900,0.00524827817725457)
--(axis cs:800000,0.00523194838114383)
--cycle;

\path [fill=color1, fill opacity=0.3]
(axis cs:1112000,0.00845362309464048)
--(axis cs:1112000,0.00675922171972274)
--(axis cs:375000,0.00937144647156793)
--(axis cs:181000,0.0120259377745725)
--(axis cs:164000,0.0122659877741912)
--(axis cs:137000,0.0130270526450292)
--(axis cs:93000,0.0164313328290597)
--(axis cs:48000,0.025028445563541)
--(axis cs:20000,0.0434409065260315)
--(axis cs:20000,0.0453839143685501)
--(axis cs:20000,0.0453839143685501)
--(axis cs:48000,0.0253406235892378)
--(axis cs:93000,0.0169917224897387)
--(axis cs:137000,0.0138173794921974)
--(axis cs:164000,0.0127108337126913)
--(axis cs:181000,0.0132795412581774)
--(axis cs:375000,0.00989324149952833)
--(axis cs:1112000,0.00845362309464048)
--cycle;

\addplot [semithick, color0, mark=square*, mark size=3, mark options={solid}]
table {%
800000 0.00496399886485179
721900 0.00506530005675257
647500 0.00550686316553831
576900 0.0055249777091708
510000 0.00552465416891923
447000 0.00582941006152067
387700 0.00649098996594617
332100 0.00672243456101102
280300 0.00679431436879306
232300 0.00826012718982852
188100 0.00868900736722845
147600 0.0098903280440688
110900 0.0120827565545634
78000 0.0152742721375987
48800 0.0215574282418596
23400 0.0397291551754
};
\addplot [semithick, color1, mark=*, mark size=4, mark options={solid}]
table {%
1112000 0.00760642240718161
375000 0.00963234398554813
181000 0.012652739516375
164000 0.0124884107434413
137000 0.0134222160686133
93000 0.0167115276593992
48000 0.0251845345763894
20000 0.0444124104472908
};
\end{axis}

\end{tikzpicture}}
\end{minipage}
}
\caption{Effect of model complexity on selective prediction.}
\label{fig:CurveCompareSP_C10}
\vspace{-0.04in}
%\vspace{-0.05in}
\end{figure}

\textbf{The performance of confidence calibration is insensitive to the model size.} For confidence calibration, the uncertainty estimation quality measured by AECE and AMCE is shown in Figure~\ref{fig:CurveCompareECEMCE}. We also plot the results measured by ECE and MCE in the same figures to validate the discussion. We find that the estimation quality remains almost flat and does not show a strong trend with the model complexity in terms of AECE and AMCE. This is a mixed result of a number of factors including model accuracy, the effectiveness of temperature scaling, and the confidence distribution. 

In terms of the effect of adaptive binning, it is observed that AECE is generally bigger than ECE by a small margin. The reason is that different binning methods lead to different levels of internal compensation as discussed above. The adaptive binning used by AECE creates 12.6 bins and 21.2 bins on average for Cifar10 and Cifar100 respectively, which are more than the 10 equal-range bins used in ECE~\cite{guo2017calibration,kumar2018trainable,sander2019towards}.  Note that Cifar100 gets more bins than Cifar10 because Cifar100 is more difficult and the confidence distribution is significantly flatter, which naturally enables more bins with accurate accuracy estimation. This further validates the superiority of the adaptive binning.  Note that AECE is very close to ECE even when the underline bins are very different, because most of the samples are in the bin with the highest confidence (the case is more severe when the model has high accuracy) and these samples dominate the value of ECE and AECE. The advantage of using adaptive binning is much more significant in reliability diagrams and AMCE compared with the ``expected'' calibration error.

Meanwhile, as shown in Figure~\ref{fig:F_AMCE}, MCE is close to AMCE for WideResNet but significantly bigger than AMCE in some cases for DenseNet. The reason is that DenseNet tends to have confidence distributions that are more concentrated in the high confidence area. As a result, the inaccurate accuracy estimation in the bins with a small number of samples is exposed, and this leads to some undesired big calibration error. This is further validated in the results on Cifar10 where both WideResNet and DenseNet have more non-uniform confidence distributions because of the easier task. As shown in Figure~\ref{fig:F_AMCE_C10}, the MCE for both WideResNet and DenseNet are unstable and significantly higher than AMCE indicating an even worse situation caused by the inaccurate accuracy estimation. In summary, the comparison between the baseline and adaptive binning validates our discussion and design intuition for adaptive binning.

\begin{figure}[htb]
\vspace{-0.05in}
\centering
%\hspace*{-2.2em}
\subfloat[AECE$\downarrow$ on Cifar100]{
\begin{minipage}[b]{0.5\linewidth}
\label{fig:F_AECE}
\scalebox{0.45}{% This file was created by tikzplotlib v0.8.7.
\begin{tikzpicture}
\pgfplotsset{every axis/.append style={
                    label style={font=\large}
                    }}
\definecolor{color1}{rgb}{1,0.498039215686275,0.0549019607843137}
\definecolor{color3}{rgb}{0.83921568627451,0.152941176470588,0.156862745098039}
\definecolor{color2}{rgb}{0.172549019607843,0.627450980392157,0.172549019607843}
\definecolor{color0}{rgb}{0.12156862745098,0.466666666666667,0.705882352941177}

\begin{axis}[
legend cell align={left},
legend style={fill opacity=0.8, draw opacity=1, text opacity=1, at={(0.5,0.91)}, anchor=north, draw=white!80.0!black},
log basis x={10},
tick align=inside,
%tick pos=both,
tick pos=left,
x grid style={lightgray!92.02614379084967!black},
xlabel={Model Size (\# of parameters)},
xmin=14000, xmax=1750000,
xmode=log,
xtick style={color=black},
xtick={1000,10000,100000,1000000,10000000,100000000},
xticklabels={\(\displaystyle {10^{3}}\),\(\displaystyle {10^{4}}\),\(\displaystyle {10^{5}}\),\(\displaystyle {10^{6}}\),\(\displaystyle {10^{7}}\),\(\displaystyle {10^{8}}\)},
y grid style={lightgray!92.02614379084967!black},
ylabel={ECE},
ymin=0.00409155673491346, ymax=0.0287180508924395,
ytick style={color=black}
]
\path [fill=color0, fill opacity=0.3]
(axis cs:800000,0.0236995580887847)
--(axis cs:800000,0.0186542798804218)
--(axis cs:721900,0.0179878855489249)
--(axis cs:647500,0.0205494907044303)
--(axis cs:576900,0.020400214714018)
--(axis cs:510000,0.0151623517313721)
--(axis cs:447000,0.0158467155149736)
--(axis cs:387700,0.0117817239973264)
--(axis cs:332100,0.0182381160215034)
--(axis cs:280300,0.0135045746166744)
--(axis cs:232300,0.0115856525590448)
--(axis cs:188100,0.0121222578744799)
--(axis cs:147600,0.0127291554585566)
--(axis cs:110900,0.0164541132253411)
--(axis cs:78000,0.014652039598769)
--(axis cs:48800,0.0136249379348804)
--(axis cs:23400,0.0179805864608808)
--(axis cs:23400,0.0263027000080253)
--(axis cs:23400,0.0263027000080253)
--(axis cs:48800,0.0167007067353765)
--(axis cs:78000,0.0175035801768541)
--(axis cs:110900,0.0214694387338027)
--(axis cs:147600,0.0172031671276177)
--(axis cs:188100,0.0156030553887735)
--(axis cs:232300,0.0141893758792496)
--(axis cs:280300,0.0143397645961999)
--(axis cs:332100,0.0228083828075021)
--(axis cs:387700,0.0154136233898189)
--(axis cs:447000,0.019279765136004)
--(axis cs:510000,0.0178663745245492)
--(axis cs:576900,0.0216273674623053)
--(axis cs:647500,0.022418480763797)
--(axis cs:721900,0.0200140482096529)
--(axis cs:800000,0.0236995580887847)
--cycle;

\path [fill=color1, fill opacity=0.3]
(axis cs:1112000,0.0275986647943702)
--(axis cs:1112000,0.0243423227642072)
--(axis cs:375000,0.0133551014377911)
--(axis cs:181000,0.0150495873688302)
--(axis cs:164000,0.0118658615562256)
--(axis cs:137000,0.0170364642494844)
--(axis cs:93000,0.016727589002146)
--(axis cs:48000,0.0155444335344967)
--(axis cs:20000,0.017110621588254)
--(axis cs:20000,0.0218669668570882)
--(axis cs:20000,0.0218669668570882)
--(axis cs:48000,0.0182493294052739)
--(axis cs:93000,0.0233549673784862)
--(axis cs:137000,0.0186995599636622)
--(axis cs:164000,0.0198436905670635)
--(axis cs:181000,0.0198195621010052)
--(axis cs:375000,0.0154392412661834)
--(axis cs:1112000,0.0275986647943702)
--cycle;

\path [fill=color2, fill opacity=0.3]
(axis cs:800000,0.0205541262912465)
--(axis cs:800000,0.0155731674007839)
--(axis cs:721900,0.0161438429204543)
--(axis cs:647500,0.0127347160268721)
--(axis cs:576900,0.0171294097511722)
--(axis cs:510000,0.0121598041881438)
--(axis cs:447000,0.0146654826218185)
--(axis cs:387700,0.00774306929178629)
--(axis cs:332100,0.01426096422747)
--(axis cs:280300,0.00783327661108336)
--(axis cs:232300,0.00949642439382907)
--(axis cs:188100,0.00521094283298283)
--(axis cs:147600,0.00821527210305887)
--(axis cs:110900,0.0108484532065025)
--(axis cs:78000,0.00805200535722836)
--(axis cs:48800,0.0101848483779373)
--(axis cs:23400,0.0174775997690318)
--(axis cs:23400,0.0228142574107497)
--(axis cs:23400,0.0228142574107497)
--(axis cs:48800,0.013527976453458)
--(axis cs:78000,0.0127413401475465)
--(axis cs:110900,0.019782954742939)
--(axis cs:147600,0.0134001037062751)
--(axis cs:188100,0.0100113738689689)
--(axis cs:232300,0.0108636832741872)
--(axis cs:280300,0.0101202458958461)
--(axis cs:332100,0.0198070734063847)
--(axis cs:387700,0.0132046085308527)
--(axis cs:447000,0.0168795233478045)
--(axis cs:510000,0.0159350165131006)
--(axis cs:576900,0.0199777555952429)
--(axis cs:647500,0.016002372569525)
--(axis cs:721900,0.0169473769648848)
--(axis cs:800000,0.0205541262912465)
--cycle;

\path [fill=color3, fill opacity=0.3]
(axis cs:1112000,0.0269128592980198)
--(axis cs:1112000,0.024292137521788)
--(axis cs:375000,0.00763731299333163)
--(axis cs:181000,0.0107183299554013)
--(axis cs:164000,0.00908414562130107)
--(axis cs:137000,0.0126129746922991)
--(axis cs:93000,0.013258274950512)
--(axis cs:48000,0.0104142176232396)
--(axis cs:20000,0.0132299436155767)
--(axis cs:20000,0.0176407945762155)
--(axis cs:20000,0.0176407945762155)
--(axis cs:48000,0.0137469656820982)
--(axis cs:93000,0.018681128019439)
--(axis cs:137000,0.0160432680367219)
--(axis cs:164000,0.0152708405308239)
--(axis cs:181000,0.0169790883888633)
--(axis cs:375000,0.0107881420315529)
--(axis cs:1112000,0.0269128592980198)
--cycle;

\addplot [semithick, color0, mark=square*, mark size=3, mark options={solid}]
table {%
800000 0.0211769189846033
721900 0.0190009668792889
647500 0.0214839857341136
576900 0.0210137910881616
510000 0.0165143631279606
447000 0.0175632403254888
387700 0.0135976736935726
332100 0.0205232494145028
280300 0.0139221696064372
232300 0.0128875142191472
188100 0.0138626566316267
147600 0.0149661612930871
110900 0.0189617759795719
78000 0.0160778098878115
48800 0.0151628223351284
23400 0.0221416432344531
};
\addlegendentry{DenseNet AECE}
\addplot [semithick, color1, mark=*, mark size=4, mark options={solid}]
table {%
1112000 0.0259704937792887
375000 0.0143971713519872
181000 0.0174345747349177
164000 0.0158547760616446
137000 0.0178680121065733
93000 0.0200412781903161
48000 0.0168968814698853
20000 0.0194887942226711
};
\addlegendentry{WideResNet AECE}
\addplot [semithick, color2, mark=+, mark size=4, mark options={solid}]
table {%
800000 0.0180636468460152
721900 0.0165456099426696
647500 0.0143685442981986
576900 0.0185535826732076
510000 0.0140474103506222
447000 0.0157725029848115
387700 0.0104738389113195
332100 0.0170340188169274
280300 0.00897676125346471
232300 0.0101800538340081
188100 0.00761115835097585
147600 0.010807687904667
110900 0.0153157039747207
78000 0.0103966727523874
48800 0.0118564124156976
23400 0.0201459285898908
};
\addlegendentry{DenseNet ECE}
\addplot [semithick, color3, mark=x, mark size=4, mark options={solid}]
table {%
1112000 0.0256024984099039
375000 0.00921272751244227
181000 0.0138487091721323
164000 0.0121774930760625
137000 0.0143281213645105
93000 0.0159697014849755
48000 0.0120805916526689
20000 0.0154353690958961
};
\addlegendentry{WideResNet ECE}
\end{axis}

\end{tikzpicture}}
\end{minipage}
}%\hspace*{-0.3em}
\subfloat[AECE$\downarrow$ on Cifar10]{
\begin{minipage}[b]{0.5\linewidth}
\label{fig:F_AECE_C10}
\scalebox{0.45}{% This file was created by tikzplotlib v0.8.7.
\begin{tikzpicture}
\pgfplotsset{every axis/.append style={
                    label style={font=\large}
                    }}
\definecolor{color1}{rgb}{1,0.498039215686275,0.0549019607843137}
\definecolor{color3}{rgb}{0.83921568627451,0.152941176470588,0.156862745098039}
\definecolor{color2}{rgb}{0.172549019607843,0.627450980392157,0.172549019607843}
\definecolor{color0}{rgb}{0.12156862745098,0.466666666666667,0.705882352941177}

\begin{axis}[
legend cell align={left},
legend style={fill opacity=0.8, draw opacity=1, text opacity=1, draw=white!80.0!black},
log basis x={10},
tick align=inside,
%tick pos=both,
tick pos=left,
x grid style={lightgray!92.02614379084967!black},
xlabel={Model Size (\# of parameters)},
xmin=14000, xmax=1750000,
xmode=log,
xtick style={color=black},
xtick={1000,10000,100000,1000000,10000000,100000000},
xticklabels={\(\displaystyle {10^{3}}\),\(\displaystyle {10^{4}}\),\(\displaystyle {10^{5}}\),\(\displaystyle {10^{6}}\),\(\displaystyle {10^{7}}\),\(\displaystyle {10^{8}}\)},
y grid style={lightgray!92.02614379084967!black},
ylabel={ECE},
ymin=0.00195414243969605, ymax=0.0212602450394212,
ytick style={color=black}
]
\path [fill=color0, fill opacity=0.3]
(axis cs:800000,0.0057567389346657)
--(axis cs:800000,0.00464070622531513)
--(axis cs:721900,0.00573193286827544)
--(axis cs:647500,0.00388304460183998)
--(axis cs:576900,0.00612234538136652)
--(axis cs:510000,0.00607715276651511)
--(axis cs:447000,0.00551603488294407)
--(axis cs:387700,0.00710796689311172)
--(axis cs:332100,0.00859563455834774)
--(axis cs:280300,0.00517136278275326)
--(axis cs:232300,0.0106847260607946)
--(axis cs:188100,0.00653158509667579)
--(axis cs:147600,0.00667300690515136)
--(axis cs:110900,0.00771324635605421)
--(axis cs:78000,0.00609936520026678)
--(axis cs:48800,0.0082207979626731)
--(axis cs:23400,0.0110830590787443)
--(axis cs:23400,0.0117023683472307)
--(axis cs:23400,0.0117023683472307)
--(axis cs:48800,0.00910125520842886)
--(axis cs:78000,0.0105445746132771)
--(axis cs:110900,0.00884836388793591)
--(axis cs:147600,0.00859439176505914)
--(axis cs:188100,0.00789720843958632)
--(axis cs:232300,0.0113623590203536)
--(axis cs:280300,0.00581914361307604)
--(axis cs:332100,0.0125055869869481)
--(axis cs:387700,0.00900505879051256)
--(axis cs:447000,0.00672277708357708)
--(axis cs:510000,0.00737607292786758)
--(axis cs:576900,0.00852868571493996)
--(axis cs:647500,0.00554381443280123)
--(axis cs:721900,0.00733770017332554)
--(axis cs:800000,0.0057567389346657)
--cycle;

\path [fill=color1, fill opacity=0.3]
(axis cs:1112000,0.0108017791306085)
--(axis cs:1112000,0.0095133414994811)
--(axis cs:375000,0.00845621385215551)
--(axis cs:181000,0.00800189716831559)
--(axis cs:164000,0.00792483338373466)
--(axis cs:137000,0.00720318069293671)
--(axis cs:93000,0.00746518904240154)
--(axis cs:48000,0.00878597074990143)
--(axis cs:20000,0.0151310477064666)
--(axis cs:20000,0.0203826949212519)
--(axis cs:20000,0.0203826949212519)
--(axis cs:48000,0.0113525001202537)
--(axis cs:93000,0.0101910818644414)
--(axis cs:137000,0.00763456344946215)
--(axis cs:164000,0.00997240919109818)
--(axis cs:181000,0.00970328542267326)
--(axis cs:375000,0.0105342381736511)
--(axis cs:1112000,0.0108017791306085)
--cycle;

\path [fill=color2, fill opacity=0.3]
(axis cs:800000,0.00520153712946876)
--(axis cs:800000,0.00441634115857958)
--(axis cs:721900,0.0045696044115508)
--(axis cs:647500,0.00283169255786537)
--(axis cs:576900,0.0034976870900085)
--(axis cs:510000,0.00490215347655923)
--(axis cs:447000,0.00337621061063684)
--(axis cs:387700,0.00609191441470854)
--(axis cs:332100,0.00852296504588519)
--(axis cs:280300,0.00397737839300943)
--(axis cs:232300,0.00996530546754078)
--(axis cs:188100,0.00648872830704686)
--(axis cs:147600,0.0057556036013836)
--(axis cs:110900,0.00702944173032099)
--(axis cs:78000,0.0055625253139268)
--(axis cs:48800,0.0066884697032627)
--(axis cs:23400,0.00621188389708096)
--(axis cs:23400,0.00934019552054381)
--(axis cs:23400,0.00934019552054381)
--(axis cs:48800,0.00962786095971215)
--(axis cs:78000,0.00862696017880027)
--(axis cs:110900,0.00830829227583182)
--(axis cs:147600,0.00836817342508955)
--(axis cs:188100,0.00746636067454324)
--(axis cs:232300,0.0114403617571984)
--(axis cs:280300,0.00412874508619202)
--(axis cs:332100,0.0120271949143096)
--(axis cs:387700,0.00724874623522398)
--(axis cs:447000,0.00455005823997207)
--(axis cs:510000,0.00865319615785698)
--(axis cs:576900,0.00639382256765166)
--(axis cs:647500,0.00359118289446645)
--(axis cs:721900,0.00653819180695186)
--(axis cs:800000,0.00520153712946876)
--cycle;

\path [fill=color3, fill opacity=0.3]
(axis cs:1112000,0.00898990810121742)
--(axis cs:1112000,0.00807253738038392)
--(axis cs:375000,0.00722247079169642)
--(axis cs:181000,0.0072419228810968)
--(axis cs:164000,0.00539573107988911)
--(axis cs:137000,0.00525628123526369)
--(axis cs:93000,0.00766762716480337)
--(axis cs:48000,0.0067497560328096)
--(axis cs:20000,0.0137729234278348)
--(axis cs:20000,0.0191799724371527)
--(axis cs:20000,0.0191799724371527)
--(axis cs:48000,0.0105507223677645)
--(axis cs:93000,0.0108458346145404)
--(axis cs:137000,0.00548213498643848)
--(axis cs:164000,0.00780267104564113)
--(axis cs:181000,0.0077742457815496)
--(axis cs:375000,0.00958274076193644)
--(axis cs:1112000,0.00898990810121742)
--cycle;

\addplot [semithick, color0, mark=square*, mark size=3, mark options={solid}]
table {%
800000 0.00519872257999042
721900 0.00653481652080049
647500 0.0047134295173206
576900 0.00732551554815324
510000 0.00672661284719134
447000 0.00611940598326058
387700 0.00805651284181214
332100 0.0105506107726479
280300 0.00549525319791465
232300 0.0110235425405741
188100 0.00721439676813106
147600 0.00763369933510525
110900 0.00828080512199506
78000 0.00832196990677195
48800 0.00866102658555098
23400 0.0113927137129875
};
\addlegendentry{DenseNet AECE}
\addplot [semithick, color1, mark=*, mark size=3, mark options={solid}]
table {%
1112000 0.0101575603150448
375000 0.00949522601290332
181000 0.00885259129549443
164000 0.00894862128741642
137000 0.00741887207119943
93000 0.00882813545342148
48000 0.0100692354350776
20000 0.0177568713138592
};
\addlegendentry{WideResNet AECE}
\addplot [semithick, color2, mark=+, mark size=4, mark options={solid}]
table {%
800000 0.00480893914402417
721900 0.00555389810925133
647500 0.00321143772616591
576900 0.00494575482883008
510000 0.0067776748172081
447000 0.00396313442530446
387700 0.00667033032496626
332100 0.0102750799800974
280300 0.00405306173960073
232300 0.0107028336123696
188100 0.00697754449079505
147600 0.00706188851323657
110900 0.0076688670030764
78000 0.00709474274636353
48800 0.00815816533148742
23400 0.00777603970881239
};
\addlegendentry{DenseNet ECE}
\addplot [semithick, color3, mark=x, mark size=4, mark options={solid}]
table {%
1112000 0.00853122274080067
375000 0.00840260577681643
181000 0.0075080843313232
164000 0.00659920106276512
137000 0.00536920811085109
93000 0.0092567308896719
48000 0.00865023920028704
20000 0.0164764479324938
};
\addlegendentry{WideResNet ECE}
\end{axis}

\end{tikzpicture}}
\end{minipage}
}\\%\hspace*{-0.3em}
%\vspace{-0.02in}
\subfloat[AMCE$\downarrow$ on Cifar100]{
\begin{minipage}[b]{0.5\linewidth}
\label{fig:F_AMCE}
\scalebox{0.45}{% This file was created by tikzplotlib v0.8.7.
\begin{tikzpicture}
\pgfplotsset{every axis/.append style={
                    label style={font=\large}
                    }}
\definecolor{color1}{rgb}{1,0.498039215686275,0.0549019607843137}
\definecolor{color3}{rgb}{0.83921568627451,0.152941176470588,0.156862745098039}
\definecolor{color2}{rgb}{0.172549019607843,0.627450980392157,0.172549019607843}
\definecolor{color0}{rgb}{0.12156862745098,0.466666666666667,0.705882352941177}

\begin{axis}[
legend cell align={left},
legend style={fill opacity=0.8, draw opacity=1, text opacity=1, draw=white!80.0!black},
log basis x={10},
tick align=inside,
%tick pos=both,
tick pos=left,
x grid style={lightgray!92.02614379084967!black},
xlabel={Model Size (\# of parameters)},
xmin=14000, xmax=1750000,
xmode=log,
xtick style={color=black},
xtick={1000,10000,100000,1000000,10000000,100000000},
xticklabels={\(\displaystyle {10^{3}}\),\(\displaystyle {10^{4}}\),\(\displaystyle {10^{5}}\),\(\displaystyle {10^{6}}\),\(\displaystyle {10^{7}}\),\(\displaystyle {10^{8}}\)},
y grid style={lightgray!92.02614379084967!black},
ylabel={MCE},
ymin=0.0041169848318501, ymax=0.369763693114623,
ytick style={color=black},
ytick={0,0.05,0.1,0.15,0.2,0.25,0.3,0.35,0.4},
yticklabels={0.00,0.05,0.10,0.15,0.20,0.25,0.30,0.35,0.40}
]
\path [fill=color0, fill opacity=0.3]
(axis cs:800000,0.0897615500411701)
--(axis cs:800000,0.0798467647874771)
--(axis cs:721900,0.0443587843084724)
--(axis cs:647500,0.0576752688272116)
--(axis cs:576900,0.0616757028923369)
--(axis cs:510000,0.061726008993178)
--(axis cs:447000,0.0634096263125965)
--(axis cs:387700,0.0414358312463613)
--(axis cs:332100,0.0593113391257016)
--(axis cs:280300,0.0595848938449928)
--(axis cs:232300,0.0362994113832916)
--(axis cs:188100,0.0475437068927824)
--(axis cs:147600,0.0347592763233453)
--(axis cs:110900,0.0521514756742755)
--(axis cs:78000,0.0403528958616479)
--(axis cs:48800,0.0423907382621525)
--(axis cs:23400,0.0495550541405466)
--(axis cs:23400,0.0704023808697282)
--(axis cs:23400,0.0704023808697282)
--(axis cs:48800,0.048147243733463)
--(axis cs:78000,0.0540972541549357)
--(axis cs:110900,0.0833970487919758)
--(axis cs:147600,0.0514000978162495)
--(axis cs:188100,0.0629366475416584)
--(axis cs:232300,0.060604300365337)
--(axis cs:280300,0.081884499577425)
--(axis cs:332100,0.0734373674855171)
--(axis cs:387700,0.0742629526501192)
--(axis cs:447000,0.0766863463853111)
--(axis cs:510000,0.0919383041512666)
--(axis cs:576900,0.126456905978039)
--(axis cs:647500,0.069863206107467)
--(axis cs:721900,0.0607777838094879)
--(axis cs:800000,0.0897615500411701)
--cycle;

\path [fill=color1, fill opacity=0.3]
(axis cs:1112000,0.0935206880257047)
--(axis cs:1112000,0.075329108738295)
--(axis cs:375000,0.0469351285646064)
--(axis cs:181000,0.0561347822289516)
--(axis cs:164000,0.0369665985313453)
--(axis cs:137000,0.0551320244486021)
--(axis cs:93000,0.0446398059477276)
--(axis cs:48000,0.0511052683936145)
--(axis cs:20000,0.0484438796485265)
--(axis cs:20000,0.056286805168589)
--(axis cs:20000,0.056286805168589)
--(axis cs:48000,0.0601042746382873)
--(axis cs:93000,0.0603874523920439)
--(axis cs:137000,0.0798052735130471)
--(axis cs:164000,0.0494844373227329)
--(axis cs:181000,0.0640889489424829)
--(axis cs:375000,0.0698165092137405)
--(axis cs:1112000,0.0935206880257047)
--cycle;

\path [fill=color2, fill opacity=0.3]
(axis cs:800000,0.109051206101533)
--(axis cs:800000,0.0751010747424114)
--(axis cs:721900,0.0590849459017157)
--(axis cs:647500,0.0582481477991595)
--(axis cs:576900,0.0824574911034416)
--(axis cs:510000,0.0931434051914539)
--(axis cs:447000,0.0577100927660447)
--(axis cs:387700,0.0551317280837618)
--(axis cs:332100,0.0551269267097487)
--(axis cs:280300,0.060334167409702)
--(axis cs:232300,0.088054914059954)
--(axis cs:188100,0.0797720231716074)
--(axis cs:147600,0.0534603271332078)
--(axis cs:110900,0.0634401380219287)
--(axis cs:78000,0.086251131127038)
--(axis cs:48800,0.035512515588348)
--(axis cs:23400,0.0429960762426173)
--(axis cs:23400,0.0483236618598296)
--(axis cs:23400,0.0483236618598296)
--(axis cs:48800,0.116547709109401)
--(axis cs:78000,0.0935908943022215)
--(axis cs:110900,0.353143388192679)
--(axis cs:147600,0.349553163045774)
--(axis cs:188100,0.122964178264777)
--(axis cs:232300,0.106266331658092)
--(axis cs:280300,0.115436363200888)
--(axis cs:332100,0.148963591800249)
--(axis cs:387700,0.147438549582135)
--(axis cs:447000,0.0945587853198209)
--(axis cs:510000,0.147645273536216)
--(axis cs:576900,0.270728846104954)
--(axis cs:647500,0.0868409626377204)
--(axis cs:721900,0.0864791215907229)
--(axis cs:800000,0.109051206101533)
--cycle;

\path [fill=color3, fill opacity=0.3]
(axis cs:1112000,0.0749177721200221)
--(axis cs:1112000,0.0595632157166672)
--(axis cs:375000,0.043073986058284)
--(axis cs:181000,0.0271568955185406)
--(axis cs:164000,0.0207372897537943)
--(axis cs:137000,0.0336361575396295)
--(axis cs:93000,0.0353458984726293)
--(axis cs:48000,0.0405265880307265)
--(axis cs:20000,0.0299043472161706)
--(axis cs:20000,0.0341548789460758)
--(axis cs:20000,0.0341548789460758)
--(axis cs:48000,0.0534394921444444)
--(axis cs:93000,0.0791567625734664)
--(axis cs:137000,0.046469617983916)
--(axis cs:164000,0.0909797038974325)
--(axis cs:181000,0.0459021023273872)
--(axis cs:375000,0.107362243546238)
--(axis cs:1112000,0.0749177721200221)
--cycle;

\addplot [semithick, color0, mark=square*, mark size=3, mark options={solid}]
table {%
800000 0.0848041574143236
721900 0.0525682840589802
647500 0.0637692374673393
576900 0.0940663044351877
510000 0.0768321565722223
447000 0.0700479863489538
387700 0.0578493919482402
332100 0.0663743533056093
280300 0.0707346967112089
232300 0.0484518558743143
188100 0.0552401772172204
147600 0.0430796870697974
110900 0.0677742622331256
78000 0.0472250750082918
48800 0.0452689909978077
23400 0.0599787175051374
};
\addlegendentry{DenseNet AMCE}
\addplot [semithick, color1, mark=*, mark size=4, mark options={solid}]
table {%
1112000 0.0844248983819998
375000 0.0583758188891735
181000 0.0601118655857172
164000 0.0432255179270391
137000 0.0674686489808246
93000 0.0525136291698857
48000 0.0556047715159509
20000 0.0523653424085578
};
\addlegendentry{WideResNet AMCE}
\addplot [semithick, color2, mark=+, mark size=4, mark options={solid}]
table {%
800000 0.0920761404219723
721900 0.0727820337462193
647500 0.07254455521844
576900 0.176593168604198
510000 0.120394339363835
447000 0.0761344390429328
387700 0.101285138832948
332100 0.102045259254999
280300 0.0878852653052948
232300 0.0971606228590231
188100 0.101368100718192
147600 0.201506745089491
110900 0.208291763107304
78000 0.0899210127146298
48800 0.0760301123488746
23400 0.0456598690512234
};
\addlegendentry{DenseNet MCE}
\addplot [semithick, color3, mark=x, mark size=4, mark options={solid}]
table {%
1112000 0.0672404939183447
375000 0.075218114802261
181000 0.0365294989229639
164000 0.0558584968256134
137000 0.0400528877617727
93000 0.0572513305230479
48000 0.0469830400875854
20000 0.0320296130811232
};
\addlegendentry{WideResNet MCE}
\end{axis}

\end{tikzpicture}}
\end{minipage}
}%\hspace*{-0.3em}
\subfloat[AMCE$\downarrow$ on Cifar10]{
\begin{minipage}[b]{0.5\linewidth}
\label{fig:F_AMCE_C10}
\scalebox{0.45}{% This file was created by tikzplotlib v0.8.7.
\begin{tikzpicture}
\pgfplotsset{every axis/.append style={
                    label style={font=\large}
                    }}
\definecolor{color1}{rgb}{1,0.498039215686275,0.0549019607843137}
\definecolor{color3}{rgb}{0.83921568627451,0.152941176470588,0.156862745098039}
\definecolor{color2}{rgb}{0.172549019607843,0.627450980392157,0.172549019607843}
\definecolor{color0}{rgb}{0.12156862745098,0.466666666666667,0.705882352941177}

\begin{axis}[
legend cell align={left},
legend style={fill opacity=0.8, draw opacity=1, text opacity=1, draw=white!80.0!black},
log basis x={10},
tick align=inside,
%tick pos=both,
tick pos=left,
x grid style={lightgray!92.02614379084967!black},
xlabel={Model Size (\# of parameters)},
xmin=14000, xmax=1750000,
xmode=log,
xtick style={color=black},
xtick={1000,10000,100000,1000000,10000000,100000000},
xticklabels={\(\displaystyle {10^{3}}\),\(\displaystyle {10^{4}}\),\(\displaystyle {10^{5}}\),\(\displaystyle {10^{6}}\),\(\displaystyle {10^{7}}\),\(\displaystyle {10^{8}}\)},
y grid style={lightgray!92.02614379084967!black},
ylabel={MCE},
ymin=-0.0645614713315986, ymax=0.946507646091831,
ytick style={color=black},
ytick={-0.2,0,0.2,0.4,0.6,0.8,1},
yticklabels={−0.2,0.0,0.2,0.4,0.6,0.8,1.0}
]
\path [fill=color0, fill opacity=0.3]
(axis cs:800000,0.0704012410634373)
--(axis cs:800000,0.0380395416835758)
--(axis cs:721900,0.0561711580524805)
--(axis cs:647500,0.0642784892060841)
--(axis cs:576900,0.0603280114226516)
--(axis cs:510000,0.0431141773418893)
--(axis cs:447000,0.046148771220321)
--(axis cs:387700,0.064004335160848)
--(axis cs:332100,0.0643726202599479)
--(axis cs:280300,0.034595382093348)
--(axis cs:232300,0.0678622917461629)
--(axis cs:188100,0.0536310276825056)
--(axis cs:147600,0.0415148382602637)
--(axis cs:110900,0.0648693695441605)
--(axis cs:78000,0.0618437234580254)
--(axis cs:48800,0.0608909455548582)
--(axis cs:23400,0.0604950508228535)
--(axis cs:23400,0.0741757252257403)
--(axis cs:23400,0.0741757252257403)
--(axis cs:48800,0.0984764495453868)
--(axis cs:78000,0.0988456014060204)
--(axis cs:110900,0.0900333803495642)
--(axis cs:147600,0.0605366377724407)
--(axis cs:188100,0.0914946176315805)
--(axis cs:232300,0.12479826969204)
--(axis cs:280300,0.0722329705810046)
--(axis cs:332100,0.0911495678366274)
--(axis cs:387700,0.117755617346238)
--(axis cs:447000,0.0663854922732471)
--(axis cs:510000,0.103613174458203)
--(axis cs:576900,0.0890541564160412)
--(axis cs:647500,0.116177105740107)
--(axis cs:721900,0.0812612520473186)
--(axis cs:800000,0.0704012410634373)
--cycle;

\path [fill=color1, fill opacity=0.3]
(axis cs:1112000,0.089941400454291)
--(axis cs:1112000,0.0663227964808471)
--(axis cs:375000,0.0528593952215105)
--(axis cs:181000,0.0463230871396922)
--(axis cs:164000,0.0722645365234746)
--(axis cs:137000,0.0563586255171245)
--(axis cs:93000,0.0357172917662236)
--(axis cs:48000,0.0412283072733281)
--(axis cs:20000,0.0673088269969549)
--(axis cs:20000,0.0793781599182586)
--(axis cs:20000,0.0793781599182586)
--(axis cs:48000,0.150167314468308)
--(axis cs:93000,0.0765020301257955)
--(axis cs:137000,0.103949405675695)
--(axis cs:164000,0.149375878653722)
--(axis cs:181000,0.0851302329698255)
--(axis cs:375000,0.0683766981213257)
--(axis cs:1112000,0.089941400454291)
--cycle;

\path [fill=color2, fill opacity=0.3]
(axis cs:800000,0.16372823349095)
--(axis cs:800000,0.0867358230603939)
--(axis cs:721900,0.0784749922164404)
--(axis cs:647500,0.0433732733536408)
--(axis cs:576900,0.155539107315962)
--(axis cs:510000,0.0734970355587459)
--(axis cs:447000,0.0949265943614621)
--(axis cs:387700,0.063454164482887)
--(axis cs:332100,-0.0186037841759882)
--(axis cs:280300,0.0769457017010435)
--(axis cs:232300,0.0940932115766575)
--(axis cs:188100,0.0877520965242126)
--(axis cs:147600,0.161517140455186)
--(axis cs:110900,0.0683793762507783)
--(axis cs:78000,0.0641098917674166)
--(axis cs:48800,0.192384816429856)
--(axis cs:23400,0.0719197600308199)
--(axis cs:23400,0.209770553986188)
--(axis cs:23400,0.209770553986188)
--(axis cs:48800,0.20075800896174)
--(axis cs:78000,0.16499257432694)
--(axis cs:110900,0.166799769266731)
--(axis cs:147600,0.197701465900315)
--(axis cs:188100,0.241740893127907)
--(axis cs:232300,0.173040493738732)
--(axis cs:280300,0.157543391052463)
--(axis cs:332100,0.641542160092109)
--(axis cs:387700,0.112260126722818)
--(axis cs:447000,0.200122876167019)
--(axis cs:510000,0.160010882045431)
--(axis cs:576900,0.188574011915054)
--(axis cs:647500,0.206908713145456)
--(axis cs:721900,0.197623069741712)
--(axis cs:800000,0.16372823349095)
--cycle;

\path [fill=color3, fill opacity=0.3]
(axis cs:1112000,0.90054995893622)
--(axis cs:1112000,0.234908802870095)
--(axis cs:375000,0.0716957489588424)
--(axis cs:181000,0.0394506970071132)
--(axis cs:164000,0.193449900225452)
--(axis cs:137000,0.0522948125772647)
--(axis cs:93000,0.0970948092059918)
--(axis cs:48000,0.188285956401517)
--(axis cs:20000,0.0264339647827417)
--(axis cs:20000,0.145920537686891)
--(axis cs:20000,0.145920537686891)
--(axis cs:48000,0.191173392785371)
--(axis cs:93000,0.688901703221456)
--(axis cs:137000,0.172329842639393)
--(axis cs:164000,0.197358518384693)
--(axis cs:181000,0.663564837068279)
--(axis cs:375000,0.149948745598902)
--(axis cs:1112000,0.90054995893622)
--cycle;

\addplot [semithick, color0, mark=square*, mark size=3, mark options={solid}]
table {%
800000 0.0542203913735065
721900 0.0687162050498995
647500 0.0902277974730955
576900 0.0746910839193464
510000 0.0733636759000461
447000 0.056267131746784
387700 0.0908799762535429
332100 0.0777610940482877
280300 0.0534141763371763
232300 0.0963302807191013
188100 0.072562822657043
147600 0.0510257380163522
110900 0.0774513749468624
78000 0.0803446624320229
48800 0.0796836975501225
23400 0.0673353880242969
};
\addlegendentry{DenseNet AMCE}
\addplot [semithick, color1, mark=*, mark size=4, mark options={solid}]
table {%
1112000 0.078132098467569
375000 0.0606180466714181
181000 0.0657266600547588
164000 0.110820207588598
137000 0.0801540155964098
93000 0.0561096609460095
48000 0.0956978108708178
20000 0.0733434934576068
};
\addlegendentry{WideResNet AMCE}
\addplot [semithick, color2, mark=+, mark size=4, mark options={solid}]
table {%
800000 0.125232028275672
721900 0.138049030979076
647500 0.125140993249548
576900 0.172056559615508
510000 0.116753958802089
447000 0.147524735264241
387700 0.0878571456028525
332100 0.311469187958061
280300 0.117244546376753
232300 0.133566852657695
188100 0.16474649482606
147600 0.17960930317775
110900 0.117589572758754
78000 0.114551233047178
48800 0.196571412695798
23400 0.140845157008504
};
\addlegendentry{DenseNet MCE}
\addplot [semithick, color3, mark=x, mark size=4, mark options={solid}]
table {%
1112000 0.567729380903157
375000 0.110822247278872
181000 0.351507767037696
164000 0.195404209305072
137000 0.112312327608329
93000 0.392998256213724
48000 0.189729674593444
20000 0.0861772512348164
};
\addlegendentry{WideResNet MCE}
\end{axis}

\end{tikzpicture}}
\end{minipage}
}
\caption{Effect of model complexity on confidence calibration.}
\label{fig:CurveCompareECEMCE}
\vspace{-0.05in}
\end{figure}
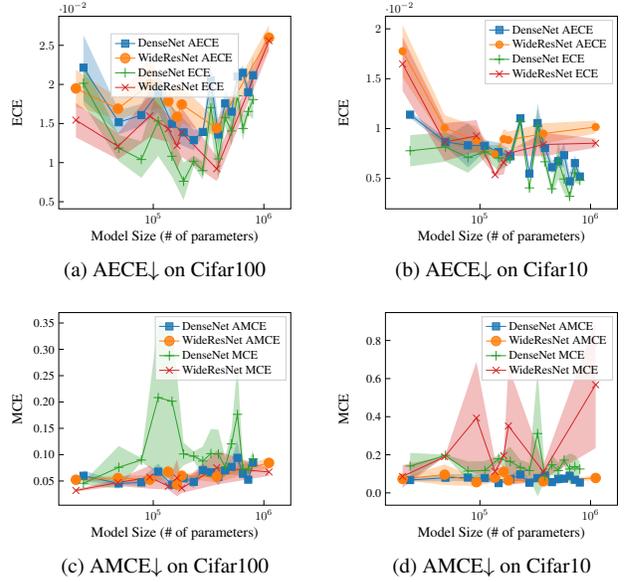

We further validate the evaluation metric on medical image segmentation where uncertainty estimation is crucial. We trained a group of U-Nets~\cite{ronneberger2015u} with standard practice and different widths on the Multi-Modality Whole Heart Segmentation dataset~\cite{zhuang2019evaluation} . The observations are similar to the classification experiments except for a few differences. Firstly, because the network is optimized for better Dice instead of pixel-wise accuracy, the AUPR is not meaningful in this case. 
 Secondly, the difference between ECE and AECE is almost zero. The small size of the dataset make the model trend to overfitting. There are also a lot of background voxels that are easy to predict. These two factors lead to the fact that most predictions have a confidence score close to 1. Then the calibration error at the high confidence area dominate the ECE, despite different binning strategy. However, MCE is still very unstable. This also validates that an excessive number of samples along cannot solve the issues of binning strategy. Detailed results are shown in the supplementary.

\section{Conclusions}
Understanding the quality of uncertainty estimation is critical when applying DNNs to real-world vision problems. We focus on two main use cases of uncertainty estimation, {\em i.e.}, selective prediction and confidence calibration. We identified the issues with the existing metrics for uncertainty estimation that may lead to unreliable or misleading results, and proposed new justified metrics to mitigate these issues. Finally, we validated the new metrics by exploring the effect of model complexity on uncertainty estimation while showing that selective prediction and confidence calibration have different complexity-uncertainty trade-offs.
{\small
\bibliographystyle{ieee_fullname}
\bibliography{egbib}

\begin{thebibliography}{10}\itemsep=-1pt

\bibitem{bickel2008verification}
J~Eric Bickel and Seong~Dae Kim.
\newblock Verification of the weather channel probability of precipitation
  forecasts.
\newblock {\em Monthly Weather Review}, 136(12):4867--4881, 2008.

\bibitem{brocker2007increasing}
Jochen Br{\"o}cker and Leonard~A Smith.
\newblock Increasing the reliability of reliability diagrams.
\newblock {\em Weather and forecasting}, 22(3):651--661, 2007.

\bibitem{chen2018confidence}
Tongfei Chen, Ji{\v{r}}{\'\i} Navr{\'a}til, Vijay Iyengar, and Karthikeyan
  Shanmugam.
\newblock Confidence scoring using whitebox meta-models with linear classifier
  probes.
\newblock {\em arXiv preprint arXiv:1805.05396}, 2018.

\bibitem{davis2006relationship}
Jesse Davis and Mark Goadrich.
\newblock The relationship between precision-recall and roc curves.
\newblock In {\em Proceedings of the 23rd international conference on Machine
  learning}, pages 233--240. ACM, 2006.

\bibitem{devries2018learning}
Terrance DeVries and Graham~W Taylor.
\newblock Learning confidence for out-of-distribution detection in neural
  networks.
\newblock {\em arXiv preprint arXiv:1802.04865}, 2018.

\bibitem{devries2018leveraging}
Terrance DeVries and Graham~W Taylor.
\newblock Leveraging uncertainty estimates for predicting segmentation quality.
\newblock {\em arXiv preprint arXiv:1807.00502}, 2018.

\bibitem{dhamija2018reducing}
Akshay~Raj Dhamija, Manuel G{\"u}nther, and Terrance Boult.
\newblock Reducing network agnostophobia.
\newblock In {\em Advances in Neural Information Processing Systems}, pages
  9175--9186, 2018.

\bibitem{gal2016dropout}
Yarin Gal and Zoubin Ghahramani.
\newblock Dropout as a bayesian approximation: Representing model uncertainty
  in deep learning.
\newblock In {\em international conference on machine learning}, pages
  1050--1059, 2016.

\bibitem{geifman2017selective}
Yonatan Geifman and Ran El-Yaniv.
\newblock Selective classification for deep neural networks.
\newblock In {\em Advances in neural information processing systems}, pages
  4878--4887, 2017.

\bibitem{geifman2019selectivenet}
Yonatan Geifman and Ran El-Yaniv.
\newblock Selectivenet: A deep neural network with an integrated reject option.
\newblock {\em arXiv preprint arXiv:1901.09192}, 2019.

\bibitem{geifman2018bias}
Yonatan Geifman, Guy Uziel, and Ran El-Yaniv.
\newblock Bias-reduced uncertainty estimation for deep neural classifiers.
\newblock {\em ICLR 2019}, 2018.

\bibitem{gneiting2007strictly}
Tilmann Gneiting and Adrian~E Raftery.
\newblock Strictly proper scoring rules, prediction, and estimation.
\newblock {\em Journal of the American Statistical Association},
  102(477):359--378, 2007.

\bibitem{guo2017calibration}
Chuan Guo, Geoff Pleiss, Yu Sun, and Kilian~Q Weinberger.
\newblock On calibration of modern neural networks.
\newblock In {\em Proceedings of the 34th International Conference on Machine
  Learning-Volume 70}, pages 1321--1330. JMLR. org, 2017.

\bibitem{hendrycks2016baseline}
Dan Hendrycks and Kevin Gimpel.
\newblock A baseline for detecting misclassified and out-of-distribution
  examples in neural networks.
\newblock {\em arXiv preprint arXiv:1610.02136}, 2016.

\bibitem{heo2018uncertainty}
Jay Heo, Hae~Beom Lee, Saehoon Kim, Juho Lee, Kwang~Joon Kim, Eunho Yang, and
  Sung~Ju Hwang.
\newblock Uncertainty-aware attention for reliable interpretation and
  prediction.
\newblock In {\em Advances in Neural Information Processing Systems}, pages
  917--926, 2018.

\bibitem{holzinger2017we}
Andreas Holzinger, Chris Biemann, Constantinos~S Pattichis, and Douglas~B Kell.
\newblock What do we need to build explainable ai systems for the medical
  domain?
\newblock {\em arXiv preprint arXiv:1712.09923}, 2017.

\bibitem{huang2017densely}
Gao Huang, Zhuang Liu, Laurens Van Der~Maaten, and Kilian~Q Weinberger.
\newblock Densely connected convolutional networks.
\newblock In {\em Proceedings of the IEEE conference on computer vision and
  pattern recognition}, pages 4700--4708, 2017.

\bibitem{jolliffe2012forecast}
Ian~T Jolliffe and David~B Stephenson.
\newblock {\em Forecast verification: a practitioner's guide in atmospheric
  science}.
\newblock John Wiley \& Sons, 2012.

\bibitem{kendall2017uncertainties}
Alex Kendall and Yarin Gal.
\newblock What uncertainties do we need in bayesian deep learning for computer
  vision?
\newblock In {\em Advances in neural information processing systems}, pages
  5574--5584, 2017.

\bibitem{krizhevsky2009learning}
Alex Krizhevsky, Geoffrey Hinton, et~al.
\newblock Learning multiple layers of features from tiny images.
\newblock Technical report, Citeseer, 2009.

\bibitem{kuleshov2018accurate}
Volodymyr Kuleshov, Nathan Fenner, and Stefano Ermon.
\newblock Accurate uncertainties for deep learning using calibrated regression.
\newblock In {\em International Conference on Machine Learning}, pages
  2801--2809, 2018.

\bibitem{kumar2018trainable}
Aviral Kumar, Sunita Sarawagi, and Ujjwal Jain.
\newblock Trainable calibration measures for neural networks from kernel mean
  embeddings.
\newblock In {\em International Conference on Machine Learning}, pages
  2810--2819, 2018.

\bibitem{lai2011evaluating}
Tze~Leung Lai, Shulamith~T Gross, David~Bo Shen, et~al.
\newblock Evaluating probability forecasts.
\newblock {\em The Annals of Statistics}, 39(5):2356--2382, 2011.

\bibitem{lakshminarayanan2017simple}
Balaji Lakshminarayanan, Alexander Pritzel, and Charles Blundell.
\newblock Simple and scalable predictive uncertainty estimation using deep
  ensembles.
\newblock In {\em Advances in Neural Information Processing Systems}, pages
  6402--6413, 2017.

\bibitem{lee2017confident}
Kimin Lee, Changho Hwang, Kyoung~Soo Park, and Jinwoo Shin.
\newblock Confident multiple choice learning.
\newblock In {\em Proceedings of the 34th International Conference on Machine
  Learning-Volume 70}, pages 2014--2023. JMLR. org, 2017.

\bibitem{lee2017training}
Kimin Lee, Honglak Lee, Kibok Lee, and Jinwoo Shin.
\newblock Training confidence-calibrated classifiers for detecting
  out-of-distribution samples.
\newblock {\em arXiv preprint arXiv:1711.09325}, 2017.

\bibitem{liang2017enhancing}
Shiyu Liang, Yixuan Li, and R Srikant.
\newblock Enhancing the reliability of out-of-distribution image detection in
  neural networks.
\newblock {\em arXiv preprint arXiv:1706.02690}, 2017.

\bibitem{liu2019deep}
Ziyin Liu, Zhikang Wang, Paul~Pu Liang, Russ~R Salakhutdinov, Louis-Philippe
  Morency, and Masahito Ueda.
\newblock Deep gamblers: Learning to abstain with portfolio theory.
\newblock In {\em Advances in Neural Information Processing Systems}, pages
  10622--10632, 2019.

\bibitem{malinin2018predictive}
Andrey Malinin and Mark Gales.
\newblock Predictive uncertainty estimation via prior networks.
\newblock In {\em Advances in Neural Information Processing Systems}, pages
  7047--7058, 2018.

\bibitem{mandelbaum2017distance}
Amit Mandelbaum and Daphna Weinshall.
\newblock Distance-based confidence score for neural network classifiers.
\newblock {\em arXiv preprint arXiv:1709.09844}, 2017.

\bibitem{merkle2013choosing}
Edgar~C Merkle and Mark Steyvers.
\newblock Choosing a strictly proper scoring rule.
\newblock {\em Decision Analysis}, 10(4):292--304, 2013.

\bibitem{naeini2015obtaining}
Mahdi~Pakdaman Naeini, Gregory Cooper, and Milos Hauskrecht.
\newblock Obtaining well calibrated probabilities using bayesian binning.
\newblock In {\em Twenty-Ninth AAAI Conference on Artificial Intelligence},
  2015.

\bibitem{nair2018exploring}
Tanya Nair, Doina Precup, Douglas~L Arnold, and Tal Arbel.
\newblock Exploring uncertainty measures in deep networks for multiple
  sclerosis lesion detection and segmentation.
\newblock In {\em International Conference on Medical Image Computing and
  Computer-Assisted Intervention}, pages 655--663. Springer, 2018.

\bibitem{nixon2019measuring}
Jeremy Nixon, Mike Dusenberry, Linchuan Zhang, Ghassen Jerfel, and Dustin Tran.
\newblock Measuring calibration in deep learning.
\newblock {\em arXiv preprint arXiv:1904.01685}, 2019.

\bibitem{ronneberger2015u}
Olaf Ronneberger, Philipp Fischer, and Thomas Brox.
\newblock U-net: Convolutional networks for biomedical image segmentation.
\newblock In {\em International Conference on Medical image computing and
  computer-assisted intervention}, pages 234--241. Springer, 2015.

\bibitem{roy2019bayesian}
Abhijit~Guha Roy, Sailesh Conjeti, Nassir Navab, Christian Wachinger,
  Alzheimer's Disease~Neuroimaging Initiative, et~al.
\newblock Bayesian quicknat: model uncertainty in deep whole-brain segmentation
  for structure-wise quality control.
\newblock {\em NeuroImage}, 195:11--22, 2019.

\bibitem{sakaridis2019semantic}
Christos Sakaridis, Dengxin Dai, and Luc Van~Gool.
\newblock Semantic nighttime image segmentation with synthetic stylized data,
  gradual adaptation and uncertainty-aware evaluation.
\newblock {\em arXiv preprint arXiv:1901.05946}, 2019.

\bibitem{sander2019towards}
J{\"o}rg Sander, Bob~D de Vos, Jelmer~M Wolterink, and Ivana I{\v{s}}gum.
\newblock Towards increased trustworthiness of deep learning segmentation
  methods on cardiac mri.
\newblock In {\em Medical Imaging 2019: Image Processing}, volume 10949, page
  1094919. International Society for Optics and Photonics, 2019.

\bibitem{seo2018confidence}
Seonguk Seo, Paul~Hongsuck Seo, and Bohyung Han.
\newblock Confidence calibration in deep neural networks through stochastic
  inferences.
\newblock {\em arXiv preprint arXiv:1809.10877}, 2018.

\bibitem{shrikumar2019calibration}
Avanti Shrikumar and Anshul Kundaje.
\newblock Calibration with bias-corrected temperature scaling improves domain
  adaptation under label shift in modern neural networks.
\newblock {\em arXiv preprint arXiv:1901.06852}, 2019.

\bibitem{snoek2019can}
Jasper Snoek, Yaniv Ovadia, Emily Fertig, Balaji Lakshminarayanan, Sebastian
  Nowozin, D Sculley, Joshua Dillon, Jie Ren, and Zachary Nado.
\newblock Can you trust your model's uncertainty? evaluating predictive
  uncertainty under dataset shift.
\newblock In {\em Advances in Neural Information Processing Systems}, pages
  13969--13980, 2019.

\bibitem{vaicenavicius2019evaluating}
Juozas Vaicenavicius, David Widmann, Carl Andersson, Fredrik Lindsten, Jacob
  Roll, and Thomas~B Sch{\"o}n.
\newblock Evaluating model calibration in classification.
\newblock {\em arXiv preprint arXiv:1902.06977}, 2019.

\bibitem{zagoruyko2016wide}
Sergey Zagoruyko and Nikos Komodakis.
\newblock Wide residual networks.
\newblock {\em arXiv preprint arXiv:1605.07146}, 2016.

\bibitem{zamo2018estimation}
Micha{\"e}l Zamo and Philippe Naveau.
\newblock Estimation of the continuous ranked probability score with limited
  information and applications to ensemble weather forecasts.
\newblock {\em Mathematical Geosciences}, 50(2):209--234, 2018.

\bibitem{zhuang2019evaluation}
Xiahai Zhuang, Lei Li, Christian Payer, Darko {\v{S}}tern, Martin Urschler,
  Mattias~P Heinrich, Julien Oster, Chunliang Wang, {\"O}rjan Smedby, Cheng
  Bian, et~al.
\newblock Evaluation of algorithms for multi-modality whole heart segmentation:
  An open-access grand challenge.
\newblock {\em Medical image analysis}, 58:101537, 2019.

\end{thebibliography}
}

\appendix

\section*{A. Examples of Adaptive Binning}

Figure~\ref{fig:example} shows the Reliability Diagram of DenseNet on Cifar10 and Cifar100 datasets before and after model calibration using adaptive binning. It can be seen that the calibration with temperature scaling significantly reduces the calibration error. For a more difficult dataset and a calibrated model, more bins are used automatically.

\begin{figure}[htb]
\vspace{-0.15in}
\centering
%\hspace*{-2.0em}
\subfloat[Uncalibrated Cifar10]{
\begin{minipage}[b]{0.5\linewidth}
\label{fig:RDC1}
\includegraphics[height=1.2in]{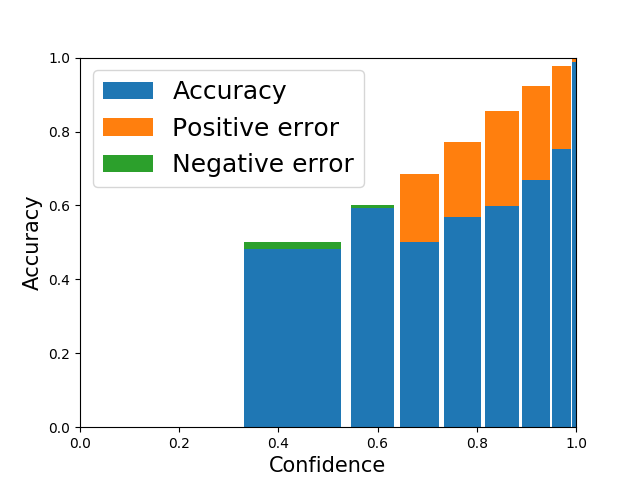}
\end{minipage}
}%\hspace*{-0.3em}
\subfloat[Uncalibrated Cifar100]{
\begin{minipage}[b]{0.5\linewidth}
\label{fig:RDC2}
\includegraphics[height=1.2in]{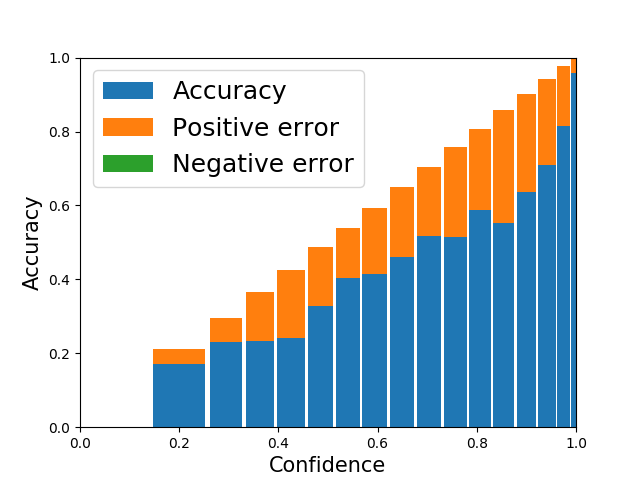}
\end{minipage}
}\\
%\hspace*{-0.3em}
\subfloat[Calibrated Cifar10]{
\begin{minipage}[b]{0.5\linewidth}
\label{fig:RDC3}
\includegraphics[height=1.2in]{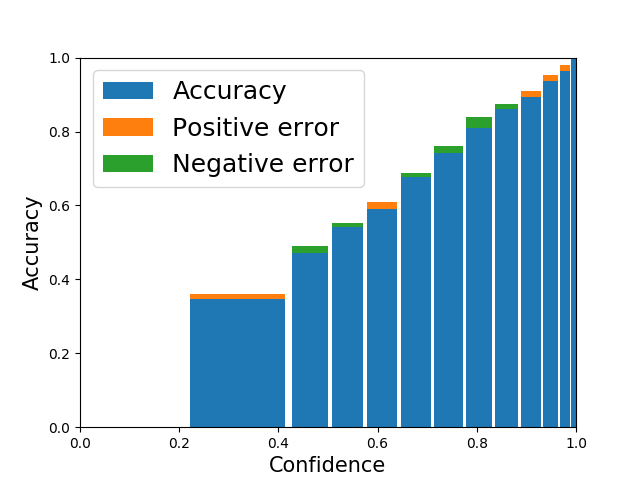}
\end{minipage}
}%\hspace*{-0.3em}
\subfloat[Calibrated Cifar100]{
\begin{minipage}[b]{0.5\linewidth}
\label{fig:RDC4}
\includegraphics[height=1.2in]{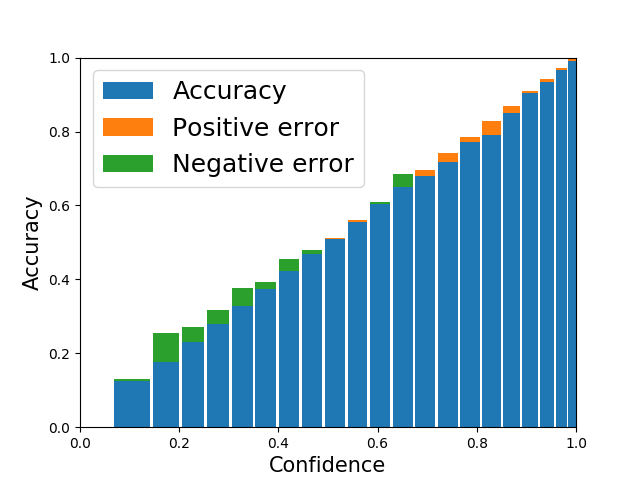}
\end{minipage}
}
%\quad
\caption{Reliability Diagrams of various models.}
\label{fig:example}
\vspace{-0.08in}
\end{figure}

\section*{B Experiment Results of Medical Image Segmentation}
Figure~\ref{fig:seg} shows the experiment results on the Multi-Modality Whole Heart Segmentation dataset.
%\vspace{-0.05in}
\begin{figure}[htb]
%\vspace{-0.12in}
\centering
%\hspace*{-2.2em}
\subfloat[AUPR$\uparrow$]{
\begin{minipage}[b]{0.31\linewidth}
\label{fig:seg1}
\scalebox{0.3}{% This file was created by tikzplotlib v0.8.7.
\begin{tikzpicture}
\pgfplotsset{every axis/.append style={
                    label style={font=\Large}
                    }}
\definecolor{color0}{rgb}{0.12156862745098,0.466666666666667,0.705882352941177}

\begin{axis}[
tick align=inside,
%tick pos=both,
tick pos=left,
x grid style={white!69.01960784313725!black},
xlabel={Model Size (\# of parameters)},
xmin=88048.8, xmax=5862415.2,
xtick style={color=black},
y grid style={white!69.01960784313725!black},
ylabel={AUPR},
ymin=0.996811578028401, ymax=0.997995831739964,
ytick style={color=black},
ytick={0.997,0.9972,0.9974,0.9976,0.9978,0.998},
yticklabels={0.9970,0.9972,0.9974,0.9976,0.9978,0.9980}
]
\path [fill=color0, fill opacity=0.3]
(axis cs:350520,0.99719493414713)
--(axis cs:350520,0.996956316833472)
--(axis cs:788144,0.997352640208802)
--(axis cs:1400680,0.997460136432638)
--(axis cs:2188128,0.997653593774498)
--(axis cs:3150488,0.997629565522482)
--(axis cs:4287760,0.997289760224881)
--(axis cs:5599944,0.997650214826023)
--(axis cs:5599944,0.997677996911501)
--(axis cs:5599944,0.997677996911501)
--(axis cs:4287760,0.997365108318379)
--(axis cs:3150488,0.997851092934893)
--(axis cs:2188128,0.99770979894678)
--(axis cs:1400680,0.997655197659189)
--(axis cs:788144,0.997534230636921)
--(axis cs:350520,0.99719493414713)
--cycle;

\addplot [semithick, color0, mark=*, mark size=3, mark options={solid}]
table {%
350520 0.997075625490301
788144 0.997443435422862
1400680 0.997557667045913
2188128 0.997681696360639
3150488 0.997740329228687
4287760 0.99732743427163
5599944 0.997664105868762
};
\end{axis}

\end{tikzpicture}}
\end{minipage}
}%\hspace*{-0.3em
\subfloat[AURC$\downarrow$]{
\begin{minipage}[b]{0.31\linewidth}
\label{fig:seg2}
\scalebox{0.3}{% This file was created by tikzplotlib v0.8.7.
\begin{tikzpicture}
\pgfplotsset{every axis/.append style={
                    label style={font=\Large}
                    }}
\definecolor{color0}{rgb}{0.12156862745098,0.466666666666667,0.705882352941177}

\begin{axis}[
tick align=inside,
%tick pos=both,
tick pos=left,
x grid style={white!69.01960784313725!black},
xlabel={Model Size (\# of parameters)},
xmin=88048.8, xmax=5862415.2,
xtick style={color=black},
y grid style={white!69.01960784313725!black},
ylabel={AURC},
ymin=0.0262790637767531, ymax=0.0402687953891011,
ytick style={color=black},
ytick={0.026,0.029,0.032,0.035,0.038},
yticklabels={0.026,0.029,0.032,0.035,0.038}
]
\path [fill=color0, fill opacity=0.3]
(axis cs:350520,0.0387238075885398)
--(axis cs:350520,0.0314187322694911)
--(axis cs:788144,0.0278240515773144)
--(axis cs:1400680,0.0329681649048178)
--(axis cs:2188128,0.0324383966612057)
--(axis cs:3150488,0.029763600180265)
--(axis cs:4287760,0.028797790693624)
--(axis cs:5599944,0.0306191167027893)
--(axis cs:5599944,0.0321573593941037)
--(axis cs:5599944,0.0321573593941037)
--(axis cs:4287760,0.029658154428302)
--(axis cs:3150488,0.0317274916027761)
--(axis cs:2188128,0.033526435898197)
--(axis cs:1400680,0.0339352991749568)
--(axis cs:788144,0.0300854749324882)
--(axis cs:350520,0.0387238075885398)
--cycle;

\addplot [semithick, color0, mark=*, mark size=3, mark options={solid}]
table {%
350520 0.0350712699290154
788144 0.0289547632549013
1400680 0.0334517320398873
2188128 0.0329824162797013
3150488 0.0307455458915206
4287760 0.029227972560963
5599944 0.0313882380484465
};
\end{axis}

\end{tikzpicture}}
\end{minipage}
}
\subfloat[Dice$\uparrow$]{
\begin{minipage}[b]{0.31\linewidth}
\label{fig:seg3}
\scalebox{0.3}{% This file was created by tikzplotlib v0.8.7.
\begin{tikzpicture}
\pgfplotsset{every axis/.append style={
                    label style={font=\Large}
                    }}
\definecolor{color0}{rgb}{0.12156862745098,0.466666666666667,0.705882352941177}

\begin{axis}[
tick align=inside,
%tick pos=both,
tick pos=left,
x grid style={white!69.01960784313725!black},
xlabel={Model Size (\# of parameters)},
xmin=88048.8, xmax=5862415.2,
xtick style={color=black},
y grid style={white!69.01960784313725!black},
ylabel={Dice},
ymin=0.852289422374407, ymax=0.904028733965217,
ytick style={color=black},
ytick={0.85,0.86,0.87,0.88,0.89,0.9,0.91},
yticklabels={0.85,0.86,0.87,0.88,0.89,0.90,0.91}
]
\path [fill=color0, fill opacity=0.3]
(axis cs:350520,0.864621693278362)
--(axis cs:350520,0.854641209264899)
--(axis cs:788144,0.883158175544008)
--(axis cs:1400680,0.886927243374074)
--(axis cs:2188128,0.889350311918529)
--(axis cs:3150488,0.88977888322887)
--(axis cs:4287760,0.893088516515306)
--(axis cs:5599944,0.898010412239448)
--(axis cs:5599944,0.901676947074725)
--(axis cs:5599944,0.901676947074725)
--(axis cs:4287760,0.894727861113658)
--(axis cs:3150488,0.89389714891118)
--(axis cs:2188128,0.895556570084939)
--(axis cs:1400680,0.888647631531857)
--(axis cs:788144,0.887107906310526)
--(axis cs:350520,0.864621693278362)
--cycle;

\addplot [semithick, color0, mark=*, mark size=3, mark options={solid}]
table {%
350520 0.85963145127163
788144 0.885133040927267
1400680 0.887787437452966
2188128 0.892453441001734
3150488 0.891838016070025
4287760 0.893908188814482
5599944 0.899843679657086
};
\end{axis}

\end{tikzpicture}}
\end{minipage}
}%\hspace*{-0.3em}
%\vspace{-0.02in}
\\%\hspace*{-0.3em}
\subfloat[AECE$\downarrow$]{
\begin{minipage}[b]{0.5\linewidth}
\label{fig:seg4}
\scalebox{0.45}{% This file was created by tikzplotlib v0.8.7.
\begin{tikzpicture}
\pgfplotsset{every axis/.append style={
                    label style={font=\large}
                    }}
\definecolor{color0}{rgb}{0.12156862745098,0.466666666666667,0.705882352941177}
\definecolor{color1}{rgb}{1,0.498039215686275,0.0549019607843137}

\begin{axis}[
legend cell align={left},
legend style={fill opacity=0.8, draw opacity=1, text opacity=1, at={(0.97,0.03)}, anchor=south east, draw=white!80.0!black},
tick align=inside,
%tick pos=both,
tick pos=left,
x grid style={white!69.01960784313725!black},
xlabel={Model Size (\# of parameters)},
xmin=88048.8, xmax=5862415.2,
xtick style={color=black},
y grid style={white!69.01960784313725!black},
ylabel={ECE},
ymin=0.0174704727759708, ymax=0.027895657806152,
ytick style={color=black},
ytick={0.018,0.02,0.022,0.024,0.026,0.028},
yticklabels={0.019,0.020,0.021,0.022,0.023,0.024,0.025,0.026}
]
\path [fill=color0, fill opacity=0.3]
(axis cs:350520,0.0209908642317867)
--(axis cs:350520,0.0197625266409791)
--(axis cs:788144,0.0216914376763375)
--(axis cs:1400680,0.0241455425119383)
--(axis cs:2188128,0.0238098205728398)
--(axis cs:3150488,0.0237825267069022)
--(axis cs:4287760,0.0253340215456406)
--(axis cs:5599944,0.0244418203001807)
--(axis cs:5599944,0.0255692157160584)
--(axis cs:5599944,0.0255692157160584)
--(axis cs:4287760,0.0254140963353496)
--(axis cs:3150488,0.0249917172583434)
--(axis cs:2188128,0.0246817707190097)
--(axis cs:1400680,0.0256035110697398)
--(axis cs:788144,0.023089681150889)
--(axis cs:350520,0.0209908642317867)
--cycle;

\path [fill=color1, fill opacity=0.3]
(axis cs:350520,0.0209908642317867)
--(axis cs:350520,0.019762526640979)
--(axis cs:788144,0.0216858158266138)
--(axis cs:1400680,0.0241382655230508)
--(axis cs:2188128,0.023806459427539)
--(axis cs:3150488,0.0237825267069021)
--(axis cs:4287760,0.0253340760649388)
--(axis cs:5599944,0.0244381572171096)
--(axis cs:5599944,0.0255679772972287)
--(axis cs:5599944,0.0255679772972287)
--(axis cs:4287760,0.0254116234866166)
--(axis cs:3150488,0.0249917172583434)
--(axis cs:2188128,0.024680347488426)
--(axis cs:1400680,0.0256036039411437)
--(axis cs:788144,0.0230890526789988)
--(axis cs:350520,0.0209908642317867)
--cycle;

\addplot [semithick, color0, mark=square*, mark size=3, mark options={solid}]
table {%
350520 0.0203766954363829
788144 0.0223905594136132
1400680 0.024874526790839
2188128 0.0242457956459248
3150488 0.0243871219826228
4287760 0.0253740589404951
5599944 0.0250055180081195
};
\addlegendentry{ECE}
\addplot [semithick, color1, mark=*, mark size=3, mark options={solid}]
table {%
350520 0.0203766954363829
788144 0.0223874342528063
1400680 0.0248709347320973
2188128 0.0242434034579825
3150488 0.0243871219826228
4287760 0.0253728497757777
5599944 0.0250030672571692
};
\addlegendentry{AECE}
\end{axis}

\end{tikzpicture}}
\end{minipage}
}%\hspace*{-0.3em}
\subfloat[AMCE$\downarrow$]{
\begin{minipage}[b]{0.5\linewidth}
\label{fig:seg5}
\scalebox{0.45}{% This file was created by tikzplotlib v0.8.7.
\begin{tikzpicture}

\pgfplotsset{every axis/.append style={
                    label style={font=\large}
                    }}

\definecolor{color0}{rgb}{0.12156862745098,0.466666666666667,0.705882352941177}
\definecolor{color1}{rgb}{1,0.498039215686275,0.0549019607843137}

\begin{axis}[
legend cell align={left},
legend style={fill opacity=0.8, draw opacity=1, text opacity=1, draw=white!80.0!black},
tick align=inside,
%tick pos=both,
tick pos=left,
x grid style={white!69.01960784313725!black},
xlabel={Model Size (\# of parameters)},
xmin=88048.8, xmax=5862415.2,
xtick style={color=black},
y grid style={white!69.01960784313725!black},
ylabel={MCE},
ymin=0.0968100176512658, ymax=0.637987506428484,
ytick style={color=black},
ytick={0,0.1,0.2,0.3,0.4,0.5,0.6,0.7},
yticklabels={0.0,0.1,0.2,0.3,0.4,0.5,0.6,0.7}
]
\path [fill=color0, fill opacity=0.3]
(axis cs:350520,0.193867837513693)
--(axis cs:350520,0.142828234015276)
--(axis cs:788144,0.160926542029524)
--(axis cs:1400680,0.189777058328588)
--(axis cs:2188128,0.194284144892327)
--(axis cs:3150488,0.188616954310799)
--(axis cs:4287760,0.20086933479221)
--(axis cs:5599944,0.153148622708443)
--(axis cs:5599944,0.613388529665883)
--(axis cs:5599944,0.613388529665883)
--(axis cs:4287760,0.280096173821006)
--(axis cs:3150488,0.273803346860137)
--(axis cs:2188128,0.60667376334068)
--(axis cs:1400680,0.201354585693692)
--(axis cs:788144,0.256714841963323)
--(axis cs:350520,0.193867837513693)
--cycle;

\path [fill=color1, fill opacity=0.3]
(axis cs:350520,0.132325831856567)
--(axis cs:350520,0.121408994413867)
--(axis cs:788144,0.155912141836818)
--(axis cs:1400680,0.195621761069334)
--(axis cs:2188128,0.190090601604198)
--(axis cs:3150488,0.205421703808857)
--(axis cs:4287760,0.221391892802816)
--(axis cs:5599944,0.214127875331957)
--(axis cs:5599944,0.229965041225346)
--(axis cs:5599944,0.229965041225346)
--(axis cs:4287760,0.221847878639084)
--(axis cs:3150488,0.214888986638051)
--(axis cs:2188128,0.202116168104535)
--(axis cs:1400680,0.212814368991318)
--(axis cs:788144,0.16896635467426)
--(axis cs:350520,0.132325831856567)
--cycle;

\addplot [semithick, color0, mark=square*, mark size=3, mark options={solid}]
table {%
350520 0.168348035764484
788144 0.208820691996423
1400680 0.19556582201114
2188128 0.400478954116503
3150488 0.231210150585468
4287760 0.240482754306608
5599944 0.383268576187163
};
\addlegendentry{MCE}
\addplot [semithick, color1, mark=*, mark size=3, mark options={solid}]
table {%
350520 0.126867413135217
788144 0.162439248255539
1400680 0.204218065030326
2188128 0.196103384854366
3150488 0.210155345223454
4287760 0.22161988572095
5599944 0.222046458278652
};
\addlegendentry{AMCE}
\end{axis}

\end{tikzpicture}}
\end{minipage}
}
\caption{Effect of model complexity on uncertainty estimation in medical image segmentation.}
\label{fig:seg}
%\vspace{-0.17in}
\end{figure}
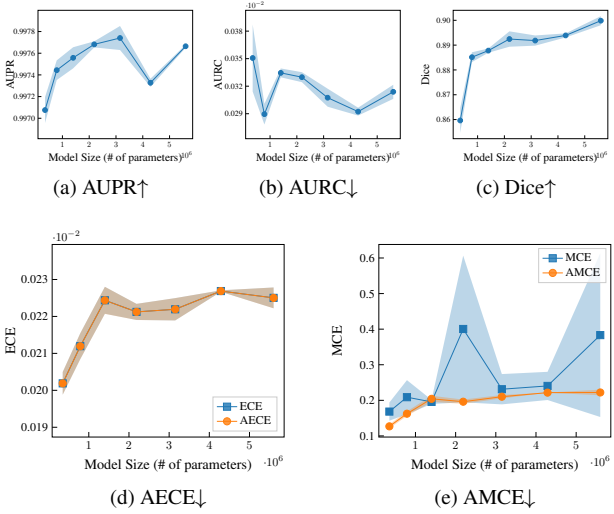

\section*{C. Proofs}
%\appendix

\subsection*{C.1 Proof of Theorem 1}
  \begin{theorem}
\label{T2}
  For any two networks A and B of the same accuracy and their uncertainties measured by arbitrary methods (which can be different for A and B), the curve of A dominates that of B in the ROC space if and only if the curve of A dominates that of B in the Risk-Coverage space. \end{theorem}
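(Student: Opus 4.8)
The plan is to reduce both notions of dominance to a single common coordinate, namely the confusion matrix, and to exploit the fact that fixing the accuracy fixes the number of positives $P$ and negatives $N$ for both models. Following the convention in Figure~\ref{fig:CurveCompare} that misclassified samples are the positive class, a threshold $t$ on the confidence score flags every sample with $r_i<t$. Writing $a$ for the shared accuracy and $\mathrm{TPR},\mathrm{FPR}$ for the operating point, I would first record the two change-of-variables identities
\begin{equation}
\mathrm{coverage}=1-(1-a)\,\mathrm{TPR}-a\,\mathrm{FPR},\qquad \mathrm{risk}=\frac{(1-a)(1-\mathrm{TPR})}{\mathrm{coverage}}.
\end{equation}
The decisive structural observation is that, because $a$ is identical for A and B, these maps are the same for both models; hence comparing A and B in Risk-Coverage space amounts to comparing the images of their ROC operating points under one fixed transformation.

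Next I would translate dominance into a pointwise statement. For the direction ``ROC dominance $\Rightarrow$ RC dominance,'' fix a coverage level $c$. Both models' operating points at coverage $c$ lie on the same iso-coverage line $(1-a)\,\mathrm{TPR}+a\,\mathrm{FPR}=1-c$ in ROC space, and along this line the risk formula shows that lower risk is equivalent to higher $\mathrm{TPR}$, hence to lower $\mathrm{FPR}$. I would then invoke the monotonicity of the ROC curve: evaluating A at $\mathrm{FPR}=\mathrm{FPR}_B$ gives $\mathrm{TPR}_A\ge\mathrm{TPR}_B$ by ROC dominance, so the difference between A's (non-decreasing) ROC curve and the (decreasing) iso-coverage line is non-negative at $\mathrm{FPR}_B$ and, being monotone increasing, has its crossing at $\mathrm{FPR}_A\le\mathrm{FPR}_B$. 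This forces $\mathrm{TPR}_A\ge\mathrm{TPR}_B$ at coverage $c$, i.e.\ $\mathrm{risk}_A(c)\le\mathrm{risk}_B(c)$.

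For the converse I would argue by contradiction at a fixed $\mathrm{FPR}=f$: assuming $\mathrm{TPR}_A(f)<\mathrm{TPR}_B(f)$, read off B's coverage $c_B$ at that point, apply RC dominance at $c_B$ to obtain an operating point of A with $\mathrm{TPR}'_A\ge\mathrm{TPR}_B(f)$ and, via the coverage constraint, $\mathrm{FPR}'_A\le f$, and then use the monotonicity of A's ROC curve to conclude $\mathrm{TPR}_A(f)\ge\mathrm{TPR}'_A\ge\mathrm{TPR}_B(f)$, a contradiction. This mirrors the ROC/PR equivalence of~\cite{davis2006relationship}, with the RC transformation above playing the role of their PR map.

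I expect the main obstacle to be conceptual rather than computational: the two dominance notions are defined along different parametrizations---ROC dominance compares curves at matched $\mathrm{FPR}$, whereas RC dominance compares them at matched coverage---so the heart of the argument is the bridging step that lets monotonicity of the non-decreasing ROC curve convert a statement anchored at one reference coordinate into a statement anchored at the other. I would also take care to dispatch the boundary cases (degenerate accuracies $a\in\{0,1\}$, the vertical and horizontal segments of the staircase ROC curve, and ties in the confidence scores) so that the iso-coverage line meets each curve at a well-defined point and the crossing argument is justified.
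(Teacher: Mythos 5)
Your proposal is correct and takes essentially the same route as the paper's proof: both exploit the fact that equal accuracy fixes the positive and negative counts, rewrite coverage and risk in confusion-matrix terms, and then derive a contradiction by showing that matched coverage with strictly lower risk forces an operating point strictly northwest in ROC space (the paper does this with raw counts, concluding $FN_a<FN_b$, $TN_a>TN_b$, hence $TPR_a>TPR_b$ and $FPR_a<FPR_b$). Your explicit iso-coverage-line crossing argument merely spells out the ROC-monotonicity step that the paper's terser derivation leaves implicit, so this is a more detailed rendering of the same idea rather than a different proof.
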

  \begin{proof}
  Proof by contradiction. Since the wrong predictions are positive samples and correct predictions are negative samples. Having the same accuracy means that the two networks have the same number of positive and negative samples. Denote TN, TP, FN, FP, TPR, and FPR as true negative, true positive, false negative, false positive, true positive rate, and false positive rate respectively. Then we have
%   \small
% \begin{align}
% \label{equ:ECE}
%     coverage =\frac{1}{|D|}\sum_{k=1}^{m}|D_k||E_{P_{\theta,\mathcal{D}}(c|r_k)}[c]-r_k|
% \end{align}
% \normalsize
   \small
\begin{align}
    coverage=\frac{TN+FN}{TN+FN+TP+FP}
\end{align}
\begin{align}
    risk = \frac{FN}{TN+FN}
\end{align}
\normalsize
  
  Suppose the curve of B dominates that of A in the ROC space but not in the Risk-Coverage space. Then there exists a point $a$ on the curve of network A and a point $b$ on the curve of B such that $coverage_a=coverage_b$ and $risk_a<risk_b$.
  
  From $coverage_a=coverage_b$, we have $TN_a+FN_a=TN_b+FN_b$. Since $\frac{FN_a}{(TN_a+FN_a)}<\frac{FN_b}{(TN_b+FN_b)}$, we have $FN_a<FN_b$ and $TN_a>TN_b$. 
  
  Remember that the numbers of positive and negative samples are equal. Therefore we have $FP_a+TN_a=FP_b+TN_b$ and $TP_a+FN_a=TP_b+FN_b$. Then we obtain $FP_a<FP_b$ and $TP_a>TP_b$. Then we have $TPR_a>TPR_b$ and $FPR_a<FPR_b$.
  
  This contradicts the fact that the curve of B is higher than that of A in the ROC space. The other direction can be proved in the same way.
  \end{proof}

\subsection*{C.2 Proof of Proposition 1}
\begin{proposition}
\label{T1}
  For any bin selection, $\hat{\text{ECE}}(P_{\theta,\mathcal{D}})=\text{ECE}(P_{\theta,\mathcal{D}})$ if and only if for any bin $B_j$,  $E_{P_{\theta,\mathcal{D}}(c|r_k)}[c]\geq r_k$ for all $r_k\in B_j$ or $E_{P_{\theta,\mathcal{D}}(c|r_k)}[c]\leq r_k$ for all $r_k\in B_j$. Otherwise, $\hat{\text{ECE}}(P_{\theta,\mathcal{D}})<\text{ECE}(P_{\theta,\mathcal{D}})$.
  \end{proposition}
  \begin{proof}
  %the samples are enough, and 
  For clarity, we reuse $n$, $B_j$ and $D_j$ as the number of bins, the range of bin, and the sample set where $j=\{1,\dots,n\}$. Note that here the bin selection no longer needs to be a uniform partition.
In order to make $\text{ECE}(P_{\theta,\mathcal{D}})$ meaningful, we assume there are enough samples and $E_{P_{\theta,\mathcal{D}}(c|r_k)}[c]$ is a solvable value. 
Denote the number of different values of $r$ as $m$ and these different values of $r$ as $r_k$ where $k=\{1,\dots,m\}$. Then we partition $D$ to $m$ bins $D_k=\{x_i|r_i=r_k\}$ so that each bin only has one unique $r$ value.
The ground truth ECE can be written as:
 \small
\begin{align}
\label{equ:ECE}
    \text{ECE}(P_{\theta,\mathcal{D}}) =\frac{1}{|D|}\sum_{k=1}^{m}|D_k||E_{P_{\theta,\mathcal{D}}(c|r_k)}[c]-r_k|
\end{align}
\normalsize
  Then we have 
    \small{
  \begin{align}
    \hat{\text{ECE}}(P_{\theta,\mathcal{D}}) &=\frac{1}{|D|}\sum_{j=0}^{n}|\sum_{x_i\in D_j}c_i-\sum_{x_i\in D_j}r_i|\\\nonumber
    &=\frac{1}{|D|}\sum_{j=0}^{n}|\sum_{r_k\in B_j}|D_k|(E_{P_{\theta,\mathcal{D}}(c|r_k)}[c]-r_k)|\nonumber 
\end{align}  
}
\normalsize
Note that
\small{
  \begin{align}
  \sum_{r_k\in B_j}|D_k|(E_{P_{\theta,\mathcal{D}}(c|r_k)}[c]-r_k)\leq \sum_{r_k\in B_j}|D_k|\left|(E_{P_{\theta,\mathcal{D}}(c|r_k)}[c]-r_k)\right|\nonumber
\end{align}  
}\normalsize
and they are equal if and only if for any bin $B_j$,  $E_{P_{\theta,\mathcal{D}}(c|r_k)}[c]\geq r_k$ for all $r_k\in B_j$ or $E_{P_{\theta,\mathcal{D}}(c|r_k)}[c]\leq r_k$ for all $r_k\in B_j$. 
Together with Equation~\ref{equ:ECE}, we conclude the proof.
\end{proof}

\subsection*{C.3 Proof of Proposition 2}
\begin{proposition}
\label{TU}
The uncertainty estimation $r$ is perfect for both selective prediction and confidence calibration if and only if, for all samples $r\in\{0,1\}$, $E_{P_{\theta,\mathcal{D}}(c|r=0)}[c]=0$, and $E_{P_{\theta,\mathcal{D}}(c|r=1)}[c]=1$.
  \end{proposition}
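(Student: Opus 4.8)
The plan is to prove the biconditional in two directions, but first to pin down what ``perfect'' means in each use case so that the rest of the argument is purely about the joint distribution $P_{\theta,\mathcal{D}}(r,c)$. Perfect confidence calibration means $\text{ECE}(P_{\theta,\mathcal{D}})=0$, which by the definition in Section~\ref{sec:setting} is equivalent to $E_{P_{\theta,\mathcal{D}}(c|r)}[c]=r$ for every $r$ in the support of the confidence distribution. Perfect selective prediction, following the discussion in Section~\ref{sec:exp1}, means there exists a threshold $t$ whose induced split places every correct prediction in $X_h$ and every wrong prediction in $X_l$; equivalently, the Risk-Coverage curve attains zero risk up to coverage equal to the accuracy. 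I would record these two reformulations at the outset.

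For the ``if'' direction I would substitute the hypothesis directly. If almost every sample has $r\in\{0,1\}$ with $E_{P_{\theta,\mathcal{D}}(c|r=0)}[c]=0$ and $E_{P_{\theta,\mathcal{D}}(c|r=1)}[c]=1$, then $|E_{P_{\theta,\mathcal{D}}(c|r)}[c]-r|=0$ at both $r=0$ and $r=1$, so $\text{ECE}(P_{\theta,\mathcal{D}})=0$ and calibration is perfect. Moreover $E_{P_{\theta,\mathcal{D}}(c|r=0)}[c]=0$ forces every $r=0$ sample to be wrong and $E_{P_{\theta,\mathcal{D}}(c|r=1)}[c]=1$ forces every $r=1$ sample to be correct, so any threshold $t\in(0,1]$ sends all correct predictions to $X_h$ and all wrong ones to $X_l$, giving perfect selective prediction.

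The substance is the ``only if'' direction, which I would argue by coupling the two perfection conditions. Suppose both hold and, for contradiction, that some value $r_0\in(0,1)$ lies in the support of $r$. Perfect calibration gives $E_{P_{\theta,\mathcal{D}}(c|r=r_0)}[c]=r_0\in(0,1)$, so among the samples with confidence exactly $r_0$ there are both correct ($c=1$) and wrong ($c=0$) predictions with positive probability. But selective prediction splits samples purely by thresholding $r$: for any threshold $t$, all these $r_0$-samples fall on the same side of $t$, so either a wrong prediction is admitted to $X_h$ (when $t\le r_0$) or a correct prediction is relegated to $X_l$ (when $t> r_0$). Either way no threshold separates correct from wrong predictions, contradicting perfect selective prediction. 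Hence the support of $r$ is contained in $\{0,1\}$, and applying perfect calibration at $r=0$ and $r=1$ yields $E_{P_{\theta,\mathcal{D}}(c|r=0)}[c]=0$ and $E_{P_{\theta,\mathcal{D}}(c|r=1)}[c]=1$, which is exactly the stated condition.

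I expect the main obstacle to be this incompatibility step: making precise that a single intermediate confidence value carrying mixed correctness cannot be resolved by any threshold, so that perfect calibration (which forces such mixing whenever $r_0\in(0,1)$, since $E_{P_{\theta,\mathcal{D}}(c|r=r_0)}[c]=r_0$ can be neither $0$ nor $1$) is jointly inconsistent with perfect selective prediction unless the confidence is degenerate on $\{0,1\}$. This is where the thresholding mechanism of selective prediction and the pointwise matching of calibration are played against each other; the only care needed beyond this is the routine caveat that values of $r$ with zero probability are irrelevant to both metrics and may be excluded from the support.
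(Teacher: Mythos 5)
Your proof is correct and follows essentially the same route as the paper's: both directions hinge on the observation that perfect calibration forces $E_{P_{\theta,\mathcal{D}}(c|r)}[c]=r$, so any confidence value $r_0\in(0,1)$ in the support carries both correct and wrong predictions, which is incompatible with perfect selective prediction. The only cosmetic difference is that you phrase perfect selective prediction via the existence of a separating threshold while the paper phrases it as the strict ranking $r_a>r_b$ for all $c_a=1$, $c_b=0$; the contradiction step is identical.
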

\begin{proof}
Given $r\in\{0,1\}$, $E_{P_{\theta,\mathcal{D}}(c|r=0)}[c]=0$, and $E_{P_{\theta,\mathcal{D}}(c|r=1)}[c]=1$, it follows trivially that $E_{P_{\theta,\mathcal{D}}(r)}[|E_{P_{\theta,\mathcal{D}}(c|r)}[c]-r|]=0$ and $r_a>r_b$ for any $c_a=1$, $c_b=0$. 

On the other side, if $E_{P_{\theta,\mathcal{D}}(r)}[|E_{P_{\theta,\mathcal{D}}(c|r)}[c]-r|]=0$, we have $E_{P_{\theta,\mathcal{D}}(c|r)}[c]=r$. If there exists a $x_i$ that $r_i\in(0,1)$, then we $E_{P_{\theta,\mathcal{D}}(c|r_i)}[c]\in(0,1)$. 

Consequently, $|\{x|c=0,r=r_i\}|>0$ and $|\{x|c=1,r=r_i\}|>0$. Then for any two samples $x_a\in \{x|c=1,r=r_i\}$ and $x_b\in\{x|c=0,r=r_i\}$, we have $c_a=1$, $c_b=0$ and $r_a=r_b$ that contradict with the fact that $r_a>r_b$ for $c_a=1$, $c_b=0$. Therefore, $x_i\in\{0,1\}$. Using $E_{P_{\theta,\mathcal{D}}(c|r)}[c]=r$, it follows immediately that $E_{P_{\theta,\mathcal{D}}(c|r=0)}[c]=0$, and $E_{P_{\theta,\mathcal{D}}(c|r=1)}[c]=1$.
\end{proof}

%\bibliographystyle{aaai}
%\bibliography{egbib}

\end{document}